\def\BibTeX{{\rm B\kern-.05em{\sc i\kern-.025em b}\kern-.08em
    T\kern-.1667em\lower.7ex\hbox{E}\kern-.125emX}}
\newtheorem{definition}{Definition}
\newtheorem{theorem}{Theorem}
\begin{document}

\title{
  A Lightweight and Accurate Spatial-Temporal Transformer for Traffic Forecasting
}

\author{Guanyao Li, Shuhan Zhong, S.-H.~Gary Chan
  \\ Ruiyuan Li, Chih-Chieh Hung, Wen-Chih Peng~
\IEEEcompsocitemizethanks{\IEEEcompsocthanksitem Guanyao Li, Shuhan Zhong, and S.-H.~Gary Chan are with the Department
of Computer Science and Engineering, The Hong Kong University of Science and Technology. \\ E-mail: \{gliaw, szhongaj, lxiangab, gchan\}@cse.ust.hk
\IEEEcompsocthanksitem Ruiyuan Li is with College of Computer Science, Chongqing University. \\ E-mail: liruiyuan@cqu.edu.cn
\IEEEcompsocthanksitem Chih-Chieh Hung is with Department of Computer Science and Engineering, National Chung Hsing University. \\ E-mail: smalloshin@email.nchu.edu.tw
\IEEEcompsocthanksitem Wen-Chih Peng is with Department of Computer Science, National Yang Ming Chiao Tung University. \\ E-mail: wcpeng@g2.nctu.edu.tw}
}

\newcommand{\FRAMEWORK}{ST-TIS}
\newcommand{\FRAMEWORKLONG}{\textbf{S}patial-\textbf{T}emporal \textbf{T}ransformer with \textbf{i}nformation fusion and region \textbf{s}ampling for traffic forecasting}
\newcommand*{\modelname}{\FRAMEWORK}

\IEEEtitleabstractindextext{
\begin{abstract}
We study the forecasting problem for traffic with dynamic, possibly periodical, and joint spatial-temporal dependency between regions.  Given the aggregated inflow and outflow traffic
  of regions in a city from time slots $0$ to $t - 1$, we predict the traffic at time $t$ at any region.  Prior arts in the area often consider the spatial and temporal dependencies in a decoupled manner, or
are rather computationally intensive in training  with a large number of hyper-parameters to tune.

We propose \modelname{}, a novel, lightweight and accurate \FRAMEWORKLONG{}.  
\modelname{} extends the canonical Transformer with  information fusion and region sampling. The information fusion module captures the
complex spatial-temporal dependency between regions.
The region sampling module is to improve the efficiency and prediction accuracy, cutting
the computation complexity for dependency learning from $O(n^2)$ to $O(n\sqrt{n})$, where $n$ is the number of regions.
With far fewer parameters than state-of-the-art models, 
\modelname{}'s offline training is significantly faster in terms of tuning and computation~(with a reduction of up to $90\%$ on training time and network parameters).  Notwithstanding such training efficiency, 
extensive experiments show that \modelname{} is substantially more accurate in online prediction 
than state-of-the-art approaches (with an average improvement of $9.5\%$ on RMSE, and $12.4\%$ on MAPE compared to STDN and DSAN). 
\end{abstract}

\begin{IEEEkeywords} 
spatial-temporal forecasting; spatial-temporal data mining; efficient Transformer; joint spatial-temporal dependency; region sampling. 
\end{IEEEkeywords}}

\maketitle

\section{Introduction}
\label{introduction}

Traffic forecasting is to predict the
inflow (i.e., the number of arriving objects per unit time) and outflow (i.e., the number of departing objects per unit time) 
of any region in a city at the next time slot.
The objects can be people, vehicles, goods/items, etc. 
Traffic forecasting has important applications in
transportation, 
retails,
public safety, city planning, etc~\cite{zheng2014urban,jiang2021graph}.
For example, with traffic forecasting, a taxi company may dispatch taxis in a timely manner to meet the supply and demand in different regions of a city.
Yet another example is bike sharing, where the company may want to balance bike supply and demand at dock stations (regions) based on such forecasting.

Although there has been much effort on deep learning to improve the prediction accuracy of the state-of-the-art forecasting models, progressive improvements on benchmarks have been correlated with an increase in the number of parameters and the amount of training resources required to train the model, making it costly to train and deploy large deep learning models~\cite{menghani2021efficient}. Therefore, a lightweight and training-efficient model is essential for fast delivery and deployment.


In this work, we study the following spatial-temporal traffic forecasting problem:
Given 
the historical (aggregated) inflow and outflow data of different regions from time slots $0$ to $t-1$ (with slot size of, say, 30 minutes),  
what is the design of a  training-efficient model to accurately
predict the inflow and outflow of any region at time $t$?
(Note that even though we consider predicting for the next time slot, our work can be straightforwardly extended to any future time slot by successive application of the algorithm.)
We seek a ``small'' training model with substantially fewer parameters,
which naturally leads to
efficiency in tuning, memory, and computation time. Despite its
training efficiency, our lightweight model
lightweight, 
it should also achieve higher accuracy than the state-of-the-art approaches in its online predictions.

Intuitively, region traffic is spatially and temporally correlated. As an example, the traffic of a region could be correlated with that of another with some temporal lag due to the travel time between them. Moreover, such dependency may be dynamic over different time slots, and it may have temporal periodic patterns.
This is the case for the traffic of office regions, which exhibit high correlation with the residence regions in workday morning but much less than at night or weekend.
To accurately predict the region traffic, it is hence significantly crucial to account for the dynamic, possibly periodical, and joint spatial-temporal~(ST) dependency between regions, no matter how far the regions is apart.

Much effort has been devoted to capturing the dependency between regions for traffic forecasting. While commendable, most  works consider spatial and temporal dependency separately with independent processing modules~\cite{zhang2016dnn,zhang2017deep,yao2018deep,yao2019revisiting,yu2018spatio,geng2019spatiotemporal}, which can hardly capture their joint nature in our current setting. 
Some recent works apply canonical Transformer~\cite{vaswani2017attention} to capture region dependency~\cite{zhou2020spatiotemporal,lin2020interpretable,lin2020preserving,xu2020spatial}.
While impressive,  canonical Transformer limits the training efficiency 
because it learns a region's embedding as the weighted aggregation of all the other regions based on their computed attention scores. This results in $O(n^2)$ computation complexity per layer, where $n$ is the number of regions. Moreover, it has been observed that the attention scores from the canonical Transformer have a long tail distribution~\cite{zhou2021informer}.
We illustrate this in Figure~\ref{fig:long-tail phenomenon} using a taxi dataset collected in New York City. We split New York City into $10 \times 20$ regions and visualize the attention scores~(after a SoftMax operation) between a target region~(i.e., the region $(6,4)$) and other regions.
Clearly, most regions have very small attention scores~(i.e., the long-tail phenomenon), with the attention scores of more than $60\%$ of regions being less than 0.004.
Such a long-tail effect may introduce noise for the region embedding learning and  degrade the prediction performance.

\begin{figure}
    \centering
    \includegraphics[width=.75\linewidth]{./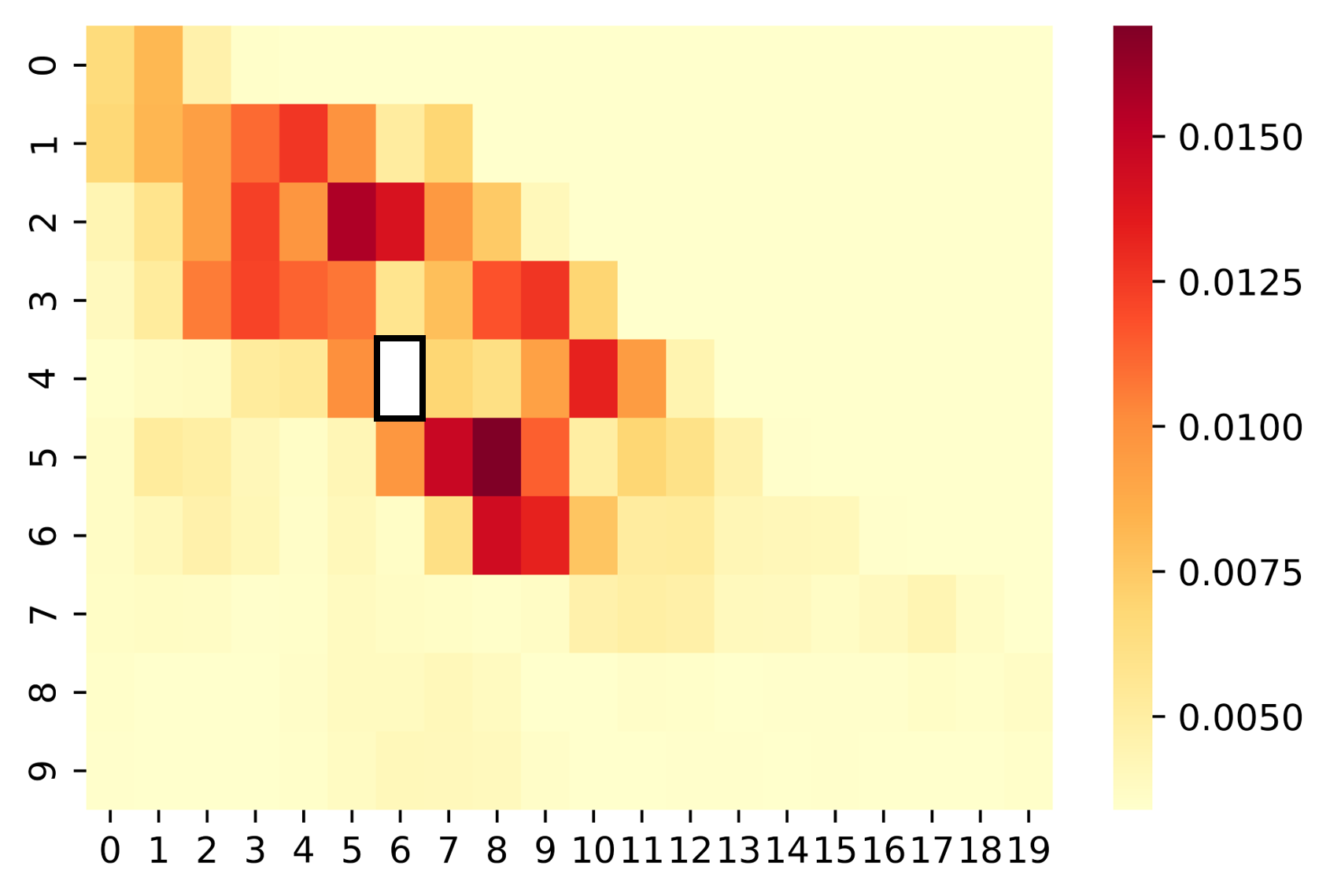}
    \caption{Visualization of attention scores between a target region~(6,4) and other regions. The color of a cell $(x_i,y_i)$ indicates the dependency of $(6,4)$ on $(x_i,y_i)$, where a darker color indicates stronger dependency.}
    \label{fig:long-tail phenomenon}
\end{figure}

We propose \modelname{}, 
a novel, small, efficient and accurate \FRAMEWORKLONG{}. Given the historical~(aggregated) inflow and outflow of regions from $0$ to $t-1$, \modelname{} predicts the inflow and outflow of any region at $t$, without relying on the transistion data between regions. With a small set of parameters,
\modelname{} is efficient to train~(offline phase).  It extends the canonical Transformer with  novel information fusion and region sampling strategies to learn the dynamic and possibly periodical spatial-temporal dependency between regions in a joint manner, hence achieving high prediction accuracy~(online phase).

\modelname{} makes the following contributions: 

\begin{itemize}
    \item {\em A data-driven Transformer scheme for dynamic, possibly periodical, and joint spatial-temporal dependency learning.} \modelname{} jointly considers the spatial-temporal dependencies between regions, rather than considering the
      two dependencies sequentially in a decoupled manner. In particular, \modelname{} considers the dynamic spatial-temporal dependency for any individual time slot with an information fusion module, and also the possibly periodical characteristic of spatial-temporal dependency from multiple time slots using an attention mechanism. Moreover, the dependency learning is data-driven, without any assumption on spatial locality. Due to its design, \modelname{} is small (in parameter footprint), fast (in training  time), and accurate (in prediction).       

    \item {\em A novel region sampling strategy for computationally efficient dependency learning.} \modelname{} leverages the Transformer framework~\cite{vaswani2017attention} to learn region dependency. To address the quadratic computation issue and mitigate the long-tail effect, it employs a novel region sampling strategy to generate a connected region graph and learns the dependency based on the graph. The dependencies between any pair of regions~(both close and distant dependencies) are guaranteed to be considered in \modelname{} via information propagation, and the computational complexity is reduced from $O(n^2)$ to $O(n\sqrt{n})$, where $n$ is the number of regions.

    \item {\em Extensive experimental validation:} We evaluate \modelname{} on two large-scale real datasets of taxi and bike sharing. Our results shows that \modelname{} is substantially more accurate than the state-of-the-art approaches, with a significant improvement in RMSE and MAPE~(an average improvement of $9.5\%$ on RMSE, and $12.4\%$ on MAPE compared to STDN and DSAN). Furthermore, it is much more lightweight than most state-of-the-art models, and is ultra fast for training~(with a reduction of $46\% \sim 95\%$ on training time and $23\% \sim 98\%$ on network parameters). 

\end{itemize}

The remainder of this paper is organized as follows.
We first discuss related works in 
Section~\ref{related_work}.  After preliminaries in Section~\ref{sec:preliminary}, we detail \modelname{} in Section~\ref{sec:method}. We present the experimental settings and results in Section~\ref{sec:evaluation}, and conclude in Section~\ref{sec:conclusion}.

\section{Related Works}
\label{related_work}


Traffic forecasting has raised much attention in both
academia and industry due to its social and commercial
values. Some early traffic forecasting works propose using regression models, such as auto-regressive integrated moving average~(ARIMA) models~\cite{shekhar2007adaptive,pan2012utilizing,moreira2013predicting,abadi2014traffic} and non-parametric region models~\cite{smith2002comparison,silva2015predicting}. All of these works consider temporal dependency, but they have not considered the spatial dependency between regions. Some other works extract features from heterogeneous data sources~(e.g., POI, weather, etc.), and use machine learning models such as
Support Vector Machine~\cite{zhang2007forecasting},
Gradient Boosting Regression Tree~\cite{li2015traffic}, 
and linear regression model~\cite{tong2017simpler}. Despite of the encouraging results, they rely on manually defined features and have not considered the joint spatial-temporal dependency.

In recent years, deep learning techniques have been employed to study spatial and temporal correlations for traffic forecasting. Most existing works consider the spatial and temporal dependency in a decouple manner~\cite{zhang2016dnn,zhang2017deep,yao2018deep,yao2019revisiting,yu2018spatio,geng2019spatiotemporal}, which can hardly capture their joint effect. For spatial dependency, Convolution Neural Network~(CNN), Graph Neural Network~(GNN), and Transformer have been widely applied. Regarding temporal dependency, Recurrent Neural Network~(RNN) and its variants such as Long Short Term Memory~(LSTM) and Gated Recurrent Unit~(GRU) have been extensively studied. Compared with these works, \modelname{} considers the spatial and temporal dependency in a joint manner.

CNN has been applied in many works to capture dependencies between close regions~\cite{zhang2016dnn,zhang2017deep,yao2018deep,yao2019revisiting,li2020autost}. 
In these works, a city is divided into some connected but non-overlapping grids, and the traffic in each grid is then predicted.
However, these works cannot be used for fine-grained flow forecasting at an individual location~\cite{he2020towards}, such as predicting flow for a docked bike-sharing station or a subway station. Moreover, CNN can hardly capture distant traffic dependency due to its relatively small receptive field~\cite{yao2019graph,li2019deepgcns}.

Some other works use GCN to capture spatial dependency~\cite{yu2018spatio,geng2019spatiotemporal,pan2019urban,wang2020traffic,fang2021spatial,li2021spatial,song2020spatial,shi2020predicting,yuan2021effective}. In these works, a city is represented as a graph structure, and convolution operations are applied to aggregate spatially distributed information in the graph.  In each aggregation layer, a region would aggregate the embedding of its neighbouring regions in the graph. However, these works highly rely on the graph structure for dependency learning. Prior works usually construct graphs based on the distance between regions or road network, based on the locality assumption~(i.e., close regions have higher dependency). They have to stack more layers to learn dependency if the distance between two regions in the graph is long, and it ends up with an inefficient and over-smoothing model~\cite{fang2021spatial}. In recent years, Transformer~\cite{vaswani2017attention} has been applied for traffic forecasting~\cite{zhou2020spatiotemporal,lin2020interpretable,lin2020preserving,xu2020spatial}. The canonical Transformer can be seen as a special graph neural network with a complete graph, in which any pair of regions are connected. Consequently, the dependencies between both close and distant regions could be considered. Moreover, the self-attention mechanism and the network structure of Transformer have been demonstrated to be powerful in many prior works. However, the computational complexity of each aggregation round is $O(n^2)$ for Transformer where $n$ is the number of regions, while that for GNN is $O(E)$ where $E$ is the number of edges in the graph~($E \le n^2$). Furthermore, as a region may only have a strong dependency on a small portion of regions, aggregating the embedding of all regions would introduce noise and degrade its performance. In \modelname{}, we propose a region connected graph~(i.e., there exists a path between any pair of regions in the graph), in which the degree of any node~(i.e., region) is $O(\sqrt{n})$ and the distance between any two regions in the graph is no more than $2$. We extend the canonical Transformer with the proposed region connected graph, so that it can inherit the advantage of efficiency and effectiveness from both GNN and Transformer.

RNN and its variants such as LSTM and GRU~\cite{hochreiter1997long,chung2014empirical} have been used to capture temporal dependency~\cite{li2017diffusion,yao2018deep,shi2020predicting,yuan2021effective}. 
However, the performance of RNN-based models deteriorates rapidly as the length of the input sequence increases~\cite{cho2014properties}.
Some works incorporate the attention mechanism~\cite{bahdanau2014neural} to improve their capability of modeling long-term temporal correlations~\cite{pan2019urban,geng2019spatiotemporal,yao2019revisiting,liang2018geoman}. 
Nevertheless, RNN-based networks are widely known to be difficult to train and are computationally intensive~\cite{yu2018spatio}. As recurrent networks generate the current hidden states as a function of the previous hidden state and the input for the position, they are in general more difficult to be trained in parallel. 
To address the issue, the self-attention mechanism is proposed as the replacement of RNN to model sequential data~\cite{vaswani2017attention}. It has enjoyed success in capturing temporal correlations for traffic forecasting~\cite{li2019forecaster,zhou2020spatiotemporal,lin2020interpretable,lin2020preserving}. Compared with RNN-based models, self-attention models can directly model long-term temporal interactions,
but the computational complexity of using self-attention for temporal dependency learning in existing works is $O(q^2)$, where $q$ is the number of historical time slots. Compared with them, \FRAMEWORK{} is conditional on the spatial-temporal dependency at any individual slot to generate weights for different time slots, so its computational complexity is $O(q)$ in our work. In addition, the temporal dependency is jointly considered with the spatial dependency in \modelname{}, instead of in a decouple manner.

Some variants of Transformer have been proposed to address the efficiency issues of the canonical Transformer, such as LogSparse~\cite{li2019enhancing}, Reformer~\cite{kitaev2019reformer}, Informer~\cite{zhou2021informer}, etc. While impressive, these approaches cannot be used in the scenario of capturing spatial-temporal dependency between regions for traffic forecasting we are considering in this work.

\section{Preliminaries}
\label{sec:preliminary}



\subsection{Problem formulation}
\begin{definition}
(Region) The area~(e.g., a city or subway route) is partitioned into $n$ non-overlapping regions. We use $R = \{r_1, r_2, \dots, r_n\}$ to denote the partitioned regions, in which $r_i$ denotes the $i$-th region. 
\end{definition}

The way to partition an area is {\em flexible} for \modelname{}, e.g., grid map, road network, clustering, or train/bus/bike stations, etc. 

\begin{definition}
(Traffic data) We use $\mathcal{I}$ and $\mathcal{O}$ to denote the inflow and outflow data of all regions over time, respectively. Specifically, $\mathcal{I}^t \in \mathbb{R}^{1 \times n }$ and $\mathcal{O}^t \in \mathbb{R}^{1 \times n }$ is the inflow and outflow of the $n$ regions at time $t$. Moreover, $\mathcal{I}_i^{t}$ and $\mathcal{O}_i^{t}$ is the inflow and outflow of region $r_i$ at time $t$~(i.e., the number of objects arriving at/departing from the region $r_i$ at time slot $t$). Furthermore, we use $\mathcal{I}_i^{t':t} = [\mathcal{I}_i^{t'}, \mathcal{I}_i^{t'+1}, \dots, \mathcal{I}_i^{t}]$  to denote the inflow  of $r_i$ from $t'$ to $t$. Similarly, $\mathcal{O}_i^{t':t} = [\mathcal{O}_i^{t'}, \mathcal{O}_i^{t'+1}, \dots, \mathcal{O}_i^{t}]$ indicates the outflow of $r_i$ from $t'$ to $t$.
\end{definition}

The formal formulation of the traffic forecasting problem is as follows:

\begin{definition}
(Traffic Forecasting) Given the traffic data of regions from $0$ to $t-1$, namely $\mathcal{I}^{0:t-1}$ and $\mathcal{O}^{0:t-1}$, the traffic forecasting problem is to predict the inflow and outflow of any region at $t$, namely $\mathcal{I}^{t}$ and $\mathcal{O}^{t}$.
\end{definition}

\subsection{\modelname{} Overview}
\label{sec:overview}

\begin{figure}
  \centering
  \includegraphics[width=0.75\linewidth]{./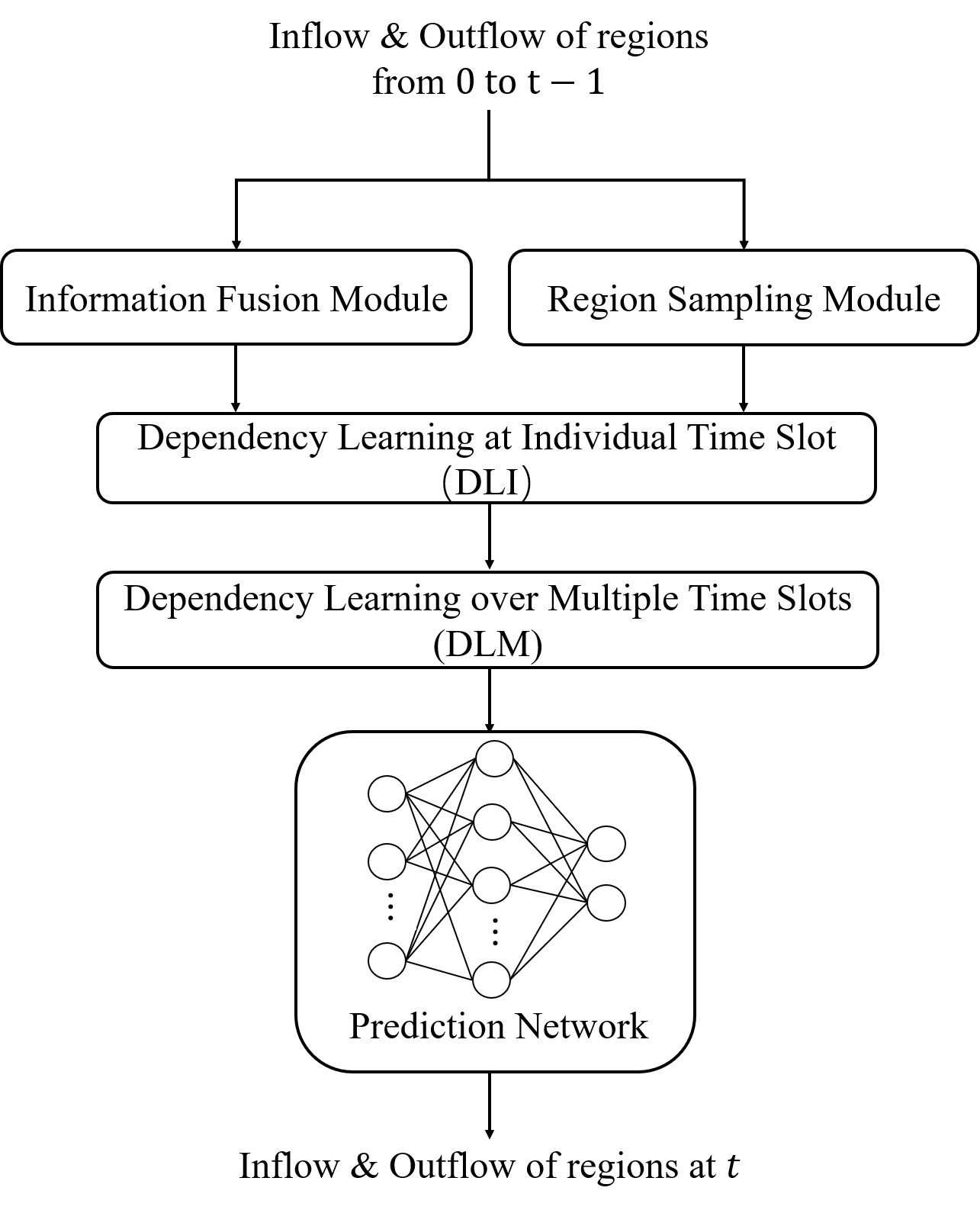}
  \caption{\modelname{} overview. }
  \label{fig:overview_of_STREST}
\end{figure}

We overview the proposed \modelname{} in Figure \ref{fig:overview_of_STREST}. Given the traffic data of all regions from time slots $0$ to $t-1$, \modelname{} is an end-to-end model to capture the spatial-temporal dependency and predict the inflow and outflow for any region at $t$.
There are five modules in \modelname{}. We explain the design goals and the relationship among modules as follows:
\begin{itemize}
  \item {\em Information Fusion Module:} A key to accurately predicting the traffic for regions is capturing the dynamic spatial-temporal~(ST) dependency between regions in a joint manner. To this end, \modelname{} employs a information fusion module to learn the spatial-temporal-flow~(STF) embedding by encoding its spatial, temporal, and flow information for any individual region at a time slot. 
  \item {\em Region Sampling Module:} To address the issues of quadratic computational complexity and long tail distribution of attention scores for the canonical Transformer, \modelname{} uses a novel region sampling strategy to generate a region connected graph.
  The dependency learning would be based on the generated graph. 
  \item {\em Dependency Learning for Individual Time Slot~(DLI):} Given the STF embedding at a time slot and the region connected graph, \modelname{} extends the canonical Transformer to jointly capture the dynamic spatial-temporal dependency between regions at the time slot. As the spatial and temporal information has been both encoded in the STF embedding, the joint effect of spatial-temporal dependencies between regions could be captured. Moreover, with the region connected graph, only the attention scores between neighbouring nodes~(i.e., regions) are computed, and only the embedding of one's neighbouring regions are aggregated. Dependency between non-neighbouring nodes is considered via information propagation between multiple layers in the network. Consequently, it cuts the computational complexity of a layer from $O(n^2)$ to $O(n \times \sqrt{n})$ where $n$ is the number of regions, and addresses the issue of the long-tail effect for dependency learning.
  \item {\em Dependency Learning over Multiple Time Slots~(DLM):} Given the region embedding from DLI at multiple time slots, DLM captures the periodic patterns of spatial-temporal dependency using the attention mechanism. The influence of spatial-temporal dependency at historical time slots on the predicted time slot is hence considered in \modelname{}. The learning process is data-driven, without any prior assumption of traffic periods.
  \item {\em Prediction Network~(PN)}: Given the results from DLM, \modelname{} uses a fully connected neural network to predict the inflow and outflow of regions~($\mathcal{I}^t$ and $\mathcal{O}^t$) simultaneously.
\end{itemize}

\section{\FRAMEWORK{} Details}
\label{sec:method}

We present the details of \modelname{} in this section. We first elaborate the information fusion module in Section \ref{sec:information fusion module}, followed by the region sampling module in Section \ref{sec:region sampling module}. After that, we introduce the dependency learning for individual time slot~(DLI) in Section \ref{sec: dependency_learning_for_individual_time_slot}, and the dependency learning over multiple time slots~(DLM) in Section \ref{sec:dependency learning over multiple time slots}. Finally, we present the prediction network~(PN) in Section \ref{sec:forecasting_layer}.

\subsection{Information Fusion Module}
\label{sec:information fusion module}

To capture the dynamic joint spatial-temporal dependency for traffic forecasting, it is essential to fuse the spatial-temporal information for each region at any individual time slot. To this end, \modelname{} employs the information fusion module to learn one's spatial-temporal-flow~(STF) embedding by fusing its position, time slot, and flow information.

Given $n$ regions, we first use a one-hot vector  $\mathcal{S}_i \in \mathbb{R}^{1\times n}$ to represent a region $r_i$, in which only the $i$-th element in $\mathcal{S}_i$ is $1$ and otherwise $0$. After that, we encode the position information for a region $r_i$ as follows,
\begin{equation}
    \hat{\mathcal{S}}_i = \mathcal{S}_i \cdot W^{S} + b^{S}. 
\end{equation}
where $\hat{\mathcal{S}}_i \in \mathbb{R}^{1 \times d}$ is the spatial embedding of $r_i$, $W^{S} \in \mathbb{R}^{n \times d}$ and $b^{S} \in \mathbb{R}^{1 \times d}$ are learnable parameters, and $d$ is a hyperparameter.

In terms of temporal information, we first split a day into $o$ time slots and represent the $i$-th time slot using a one-hot vector $\mathcal{T}_i \in \mathbb{R}^{1 \times o}$. After that, we learn the temporal embedding by
\begin{equation}
    \hat{\mathcal{T}}_i = \mathcal{T}_i \cdot W^{T} + b^T,
\end{equation}
where $\hat{\mathcal{T}}_i \in \mathbb{R}^{1\times d}$ is the temporal embedding of the $i$-th time slot in a day, $W_T \in \mathbb{R}^{o \times d}$ and $b^T \in \mathbb{R}^{1 \times d}$ are learnable parameters.

Recall that the traffic of one region at $t_i$ may depend on that of another region at previous time slots due to the travel time between two regions. Thus, we use one's surrounding observations instead of solely using a snapshot to learn the dependency~\cite{li2019enhancing}. We define the surrounding observations of a region at a time slot $t_j$ as follows:
\begin{definition}
(Surrounding observations) For a region $r_i$ at a time slot $t_j$, its surrounding observations are defined as the flow in the $w$ previous time slots: $\{\mathcal{I}^{t_j-w}_i, \cdots, \mathcal{I}^{t_j-2}_i, \mathcal{I}^{t_j - 1}_i, \mathcal{O}^{t_j-w}_i, \cdots, \mathcal{O}^{t_j-2}_i,\mathcal{O}^{t_j-1}_i\}$.
\end{definition}

Note that by employing the surrounding observations, the temporal lag of the dependency between regions could be considered. Given the surrounding observations of $r_i$ at $t_j$, we first apply the 1-D convolution of kernel size $1 \times p$~($p < w$) with stride $1$ and $f$ output channels on its inflow and outflow surrounding observations to extract different patterns. The flow embedding $\mathcal{F}_i^{t_j} \in \mathbb{R}^{1\times d}$ of $r_i$ at $t_j$ is computed as 
\begin{equation}
    \mathcal{F}_i^{t_j} = (Conv^I(\mathcal{I}_i^{t_j-w:t_j-1}) || Conv^O(\mathcal{O}_i^{t_j-w:t_j-1})) \cdot W^F + b^F,
\end{equation}
where $||$ is the concatenation operation, $Conv^I(\cdot)$ and $Conv^O(\cdot)$ are the convolutional operation for inflow and outflow data respectively, $W^F \in \mathbb{R}^{ l \times d}$,  $b^F \in \mathbb{R}^{1 \times d}$ are learnable parameters, and $l = 2 \times f \times (w-k+1)$. 

Finally, we fuse the position, time slot and flow information to learn the STF embedding of a region $r_i$ at time $t_j$. We define that $t_j$ is the $M(t_j)$-th time slot in a day, where $M(\cdot)$ is an matching function. The fusion process is defined as 
\begin{equation}
\label{equ:STF_embedding}
    \mathcal{L}_i^{t_j} = (\hat{\mathcal{S}}_i + \hat{\mathcal{T}}_{M(t_j)} + \mathcal{F}_i^{t_j}) \cdot W^L + b^L_i,
\end{equation}
where $\mathcal{L}_i^j \in \mathbb{R}^{1\times d}$ is the STF embedding, $W_L \in \mathbb{R}^{d \times d}$ and $b^L_i \in \mathbb{R}^{1 \times d}$ are learnable parameters.

\subsection{Region Sampling}
\label{sec:region sampling module}

\begin{figure}
    \centering
    \includegraphics[width=0.75\linewidth]{./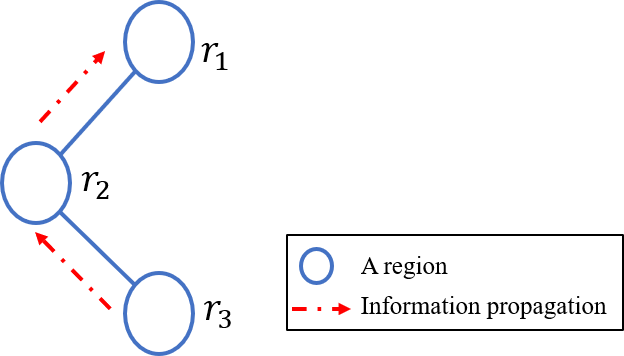}
    \caption{Information propagation in a graph.}
    \label{fig:information propagation}
\end{figure}

\begin{figure}
    \centering
    \includegraphics[width=.75\linewidth]{./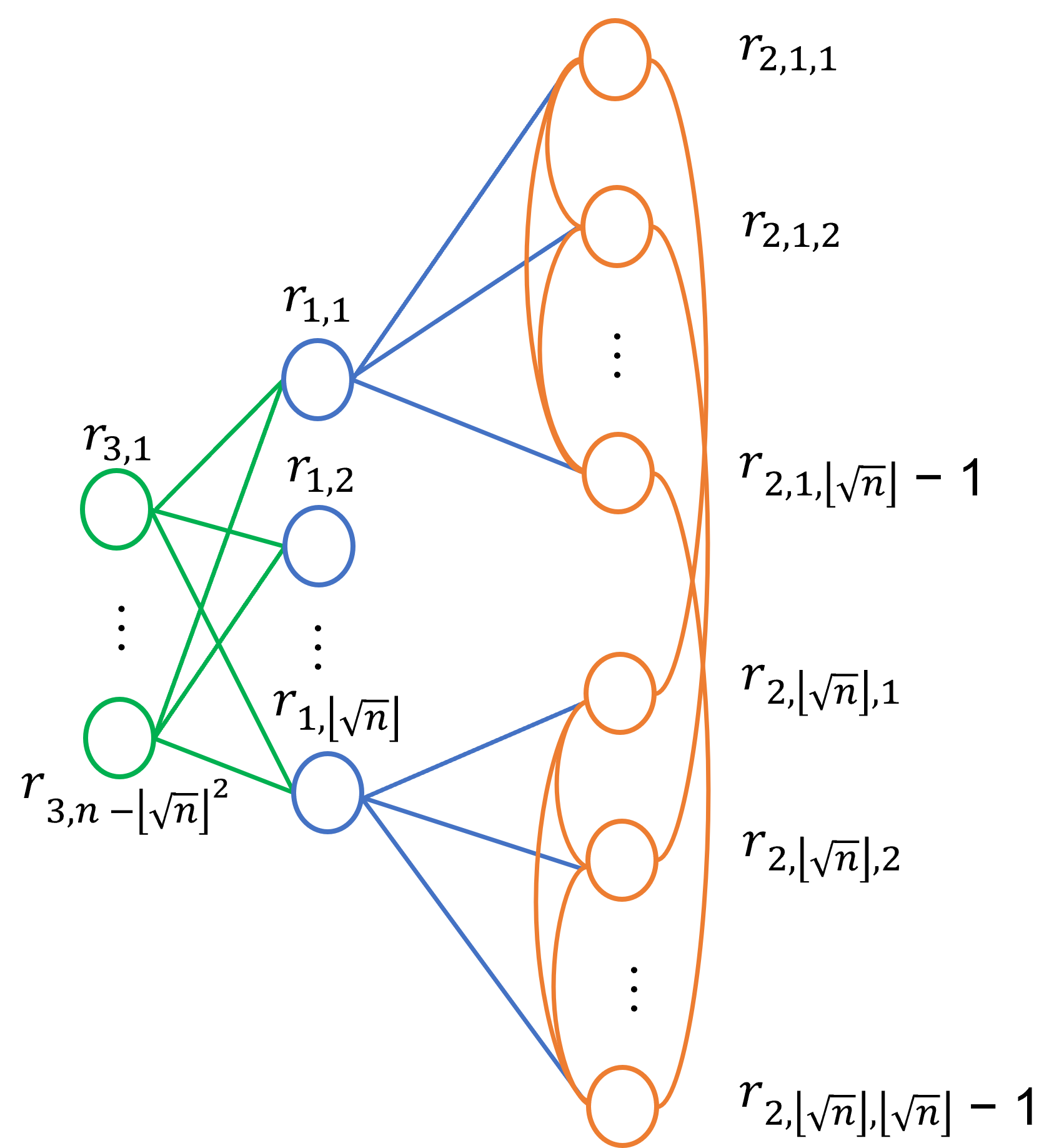}
    \caption{The process of connected graph generation.}
    \label{fig:graph generation}
\end{figure}

To capture a region's dependency on others~(both nearby and distant), a canonical Transformer computes the attention scores between the target region and all other regions, and aggregats the embedding of all other regions based on the computed attention scores. However, this results in the issues of quadratic computation and long-tail effect for dependency learning.

Fortunately, prior works have showed that information could be propogated between nodes in a graph via a multi-layer network structure~\cite{wu2020comprehensive}. Hence, with a proper graph structure and network structure, a region can aggregate the embedding of another even without directly evaluating their attention score.
We present a toy example in Figure \ref{fig:information propagation}, in which regions are represented as nodes in the graph and connected with egdes. In the first aggregation layer,  $r_1$ would capture the information from $r_2$, while $r_2$ would capture the information from $r_3$. Since the information of $r_3$ has been aggregated in $r_2$, $r_1$ could also capture it in the second aggregation layer without computing the attention score between $r_1$ and $r_3$. From this example, we conclude that a node's information can reach another with a $\beta$-layer aggregation operation if their distance is not more than $\beta$ in the graph. 

To address the limitations of the canonical Transformer, we propose to generate a connected graph~(i.e., there exists at least a path between any two nodes in the graph), in which the degree of any node (i.e., region) is not more than $c\times \sqrt{n}$ and the distance between any pair of nodes are not more than 2. In this way, for a target region, we only have to aggregate the embedding of $O(\sqrt{n})$ regions in an aggregation layer, and the influence from other regions can be captured in a two-layer aggregation process by information propagation. 
With such design, we do not have to compute the attention scores between any pairs of regions and aggregate the embedding of all $n$ regions for a target region, so that the computation complexity is reduced to $O(n\sqrt{n})$ for each layer, and the long-tail issue is also addressed.

In this work, we present a heuristic approach for region connected graph generation. We first calculate the traffic similarity between any pair of regions. The calculation of the traffic similarity is flexible and it could be any similarity metric. As an example, we use DTW in this work to measure the similarity in terms of the average traffic over time slots of a day. We use $\mathcal{M} \in \mathbb{R}^{n \times n}$ to represent the similarity matrix, in which $\mathcal{M}_{i,j}$ is the similarity between $r_i$ and $r_j$.
Based on $\mathcal{M}$, the process of the connected graph generation is illustrated in Figure \ref{fig:graph generation}.

We first select top $\lfloor \sqrt{n} \rfloor$ regions without replacement, which have the largest sum of similarity with other regions, represented as $\{r_{1,1}$, $r_{1,2},\cdots,r_{1,\lfloor \sqrt{n} \rfloor}\}$. 
For any region $r_{1,i}$, we then select $\lfloor \sqrt{n} \rfloor - 1$ regions without replacement, which have the largest similarity between them and $r_{1,i}$, represented as $\{r_{2,i,1},r_{2,i,2},\cdots,r_{2,i,\lfloor \sqrt{n} \rfloor-1}\}$, and we connect $r_{1,i}$ to $r_{2,i,j}$~($j = 1,2,\cdots,\lfloor \sqrt{n} \rfloor-1$). 
After that, we connect a region $r_{2,i,j}$ to regions $r_{2,i,k}$ and $r_{2,u,j}$, where $k \in \{\{1,2,...,\lfloor \sqrt{n} \rfloor-1\} \backslash \{j\}\}$ and  $u \in \{\{1,2,...,\lfloor \sqrt{n} \rfloor\} \backslash \{i\}\}$. 
Finally, if $\sqrt{n} \notin \mathbb{Z}$, the remaining regions woule be connected to $r_{1,i}$ where $i \in \{1,2,\cdots,\lfloor \sqrt{n} \rfloor\}$, represented as $\{r_{3,1}, r_{3,2}, \cdots, r_{3,n - \lfloor \sqrt{n} \rfloor^2}\}$.
If $\sqrt{n} \in \mathbb{Z}$, we randomly select a region from $r_{1,i}$, and connect it to $r_{1,j}$, where $j \in \{1,2,\cdots,\lfloor \sqrt{n} \rfloor\} \backslash \{i\}$, represented as $r^{*}$.

\begin{theorem}
The degree of any node in the region connected graph is $O(\sqrt{n})$, and the distance between any two nodes in the graph is less than $2$.
\end{theorem}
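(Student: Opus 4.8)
The plan is to expose the hidden product structure of the generated graph and then verify the two bounds by a short case analysis over the three kinds of vertices the construction produces. Write $m = \lfloor \sqrt{n} \rfloor$. The construction yields $m$ hubs $r_{1,i}$ together with the satellites $r_{2,i,j}$ (for $i=1,\dots,m$ and $j=1,\dots,m-1$), which I would read as an $m \times m$ grid: the index $i$ labels a \emph{column} (the group of hub $r_{1,i}$) and $j$ labels a \emph{row}, with the hub occupying the $0$-th row of its column. The edge rules say exactly that (i) each column is a clique of size $m$ (the hub is joined to all $m-1$ satellites of its group, and those satellites are mutually joined via the $r_{2,i,k}$ edges), and (ii) the satellites in each fixed row $j$ form a clique across columns (via the $r_{2,u,j}$ edges). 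Thus the satellite part is a rook's-graph-type structure, and the only horizontal adjacencies missing are those among the hubs in row $0$. The leftover vertices are patched separately: if $\sqrt{n}\notin\mathbb{Z}$ the $n-m^2$ regions $r_{3,k}$ are each joined to every hub, and if $\sqrt{n}\in\mathbb{Z}$ a single distinguished hub $r^{*}$ is joined to every other hub.

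For the degree bound I would count neighbours per vertex type. A satellite $r_{2,i,j}$ sees $m-1$ vertices in its column and $m-1$ in its row, so its degree is $2(m-1)$. A hub $r_{1,i}$ sees its $m-1$ column satellites, plus (in the non-square case) the $n-m^2$ regions $r_{3,k}$, or (in the square case) at most $m-1$ other hubs through the $r^{*}$ edges; since $m^2 \le n < (m+1)^2$ forces $n-m^2 \le 2m$, the hub degree is at most $(m-1)+2m = 3m-1$. Finally each $r_{3,k}$ has degree at most $m$ and $r^{*}$ has degree at most $2(m-1)$. Every bound is of the form $c\,m = O(\sqrt{n})$, which is the first claim.

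For the distance bound I would show every pair lies at distance at most $2$. Two vertices in a common column, or two satellites in a common row, are adjacent. Two satellites $r_{2,i,j}, r_{2,u,k}$ with $i\neq u$ and $j\neq k$ are joined in two steps through the grid corner $r_{2,i,k}$ (same row, then same column), and a satellite and a hub in different columns are joined in two steps through the same-row satellite sitting in the hub's column. The crux, and the step I expect to be the real obstacle, is the hub-hub case: in the bare grid two hubs are neither adjacent nor share a common satellite neighbour, since each hub's satellites live in its own column, so naively they sit at distance $3$. This is precisely what the final patching step repairs. In the non-square case every $r_{3,k}$ is a common neighbour of all hubs, and such a vertex exists because $n$ not being a perfect square gives $n-m^2 \ge 1$, so any two hubs are joined in two steps through it; in the square case the distinguished hub $r^{*}$ is adjacent to every hub, giving $r_{1,a}\to r^{*}\to r_{1,b}$. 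Checking that these same connectors bring the remaining mixed pairs (an $r_{3,k}$ to a satellite, two distinct $r_{3,k}$'s, and so on) within distance $2$ is routine once the hub-hub case is settled, and since every pair is then within distance $2$ the graph is in particular connected, completing the proof.
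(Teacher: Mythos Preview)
Your argument is correct and follows essentially the same case analysis as the paper's own proof: bound degrees by vertex type, then verify pairwise distance $\le 2$ by vertex type. Your rook's-graph framing is a nice addition, and you are more explicit about the hub--hub case (routing through an $r_{3,k}$ or through $r^{*}$), which the paper's proof leaves somewhat implicit; otherwise the two proofs coincide.
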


\begin{proof}
The generation of the region connected graph ensures that a node is connected to at most $\max(2 \times \lfloor \sqrt{n} \rfloor - 2, n - \lfloor \sqrt{n} \rfloor^2 + \lfloor \sqrt{n} \rfloor - 1)$ other nodes, and hence the degree is $O(\sqrt{n})$.

The distance of $(r_{3,i},r_{1,j})$~(or $(r^*, r_{1,j})$) and $(r_{1,j},r_{2,j,k})$ is both $1$, where $i \in \{1,2,\cdots,n-\lfloor \sqrt{n} \rfloor^2\}$, $j \in \{1,2,\cdots,\lfloor \sqrt{n} \rfloor\}$, and $k \in \{1,2,...,\lfloor \sqrt{n} \rfloor - 1\}$. Thus, the distance between $r_{3,i}$~(or $r^*$) and any other region is no more than $2$ in the graph.

For region $r_{1,j}$, since the distance of $(r_{1,j}, r_{2,j,k})$ and $(r_{2,j,k}, r_{2,m,k})$ is both 1 where $k \in \{1,2,...,\lfloor \sqrt{n} \rfloor - 1\}$ and $m \in \{1,2,\cdots, \lfloor \sqrt{n} \rfloor\} \backslash \{j\}$, the distance of $(r_{1,j}, r_{2,m,k})$ is hence 2. Thus, the distance between $r_{1,j}$ and any other region is also no more than $2$.

In terms of $r_{2,j,k}$, as the distance of $(r_{2,j,k}, r_{2,m,k})$ and $(r_{2,j,k}, r_{2,j,v})$ is 1, where  $m \in \{1,2,\cdots, \lfloor \sqrt{n} \rfloor\} \backslash \{j\}$ and  $v \in \{1,2,\cdots, \lfloor \sqrt{n} \rfloor - 1\} \backslash \{k\}$, the distance between $r_{2,j,k}$ and any other regions is hence no more than $2$. Therefore, the distance between any two regions in the graph is less than $2$.
\end{proof}

Because the distance between any two regions in the proposed graph is less than $2$, the dependencies between any two regions could be considered if the layer number of the aggregation network is larger than $2$.

\subsection{Dependency Learning for Individual Time Slot~(DLI)}
\label{sec: dependency_learning_for_individual_time_slot}

Given the STF embedding of regions for time $t_j$ and the generated region connected graph, \modelname{} captures the spatial-temporal dependencies between regions based on an extended Transformer encoder. 
Following the canonical Transformer, \modelname{} employs an multi-head attention mechanism, so that it could account for different dependencies between regions. For the $m$-th head, the attention score between $r_i$ and $r_v$ at $t_j$ is defined as 
\begin{equation}
\label{equ: attention equ}
    \mathcal{A}_m(r_i,r_v,t_j) = \frac{(\mathcal{L}_i^{t_j} \cdot W_{Q_m}) \cdot (\mathcal{L}_v^{t_j} \cdot W_{K_m})^T}{\sqrt{d}},
\end{equation}
where $\mathcal{L}_i^{t_j} \in \mathbb{R}^{1\times d}$ and $\mathcal{L}_v^{t_j} \in \mathbb{R}^{1 \times d}$ are the STF embedding of regions $r_i$ and $r_v$ at $t_j$~(Equation \ref{equ:STF_embedding}), $W_{Q_m} \in \mathbb{R}^{d \times d}$ and $W_{K_m} \in \mathbb{R}^{d \times d}$ are learnable parameters, and $d$ is a hyperparameter for the embedding size.

Unlike the canonical Transformer, we do not evaluate the attention scores between one region and all other regions. Instead, we only compute one's attention scores with its neighbouring regions in the region connected graph, and aggregate their embedding in terms of their attention score to update the region's embedding.
For the $m$-th head, the embedding of a region $r_i$ at time $t_j$ is then updated as
\begin{equation}
\begin{split}
    \hat{\mathcal{L}}^{t_j}_{i,m} &= \sum_{r_v \in Neigh(r_i)} softmax(\mathcal{A}_m(r_i,r_v,t_j)) \cdot \mathcal{L}^{t_j}_v \\ &= \sum_{r_v \in Neigh(r_i)} \frac{\exp(\mathcal{A}_m(r_i,r_v,t_j))}{\sum_{r_u \in Neigh(r_i)} \exp(\mathcal{A}_m(r_i,r_u,t_j))} \cdot \mathcal{L}^{t_j}_v,
\end{split}
\end{equation}
where $\hat{\mathcal{L}}^{t_j}_{i,m} \in \mathbb{R}^{1 \times d}$ is the output of the $m$-th head, $Neigh(r_i)$ is the neighbouring regions of $r_i$ in the graph, $\mathcal{A}_m(r_i,r_v,t_j)$ is the attention score defined in Equation \ref{equ: attention equ}. As the degree of any node is $O(\sqrt{n})$, the computation complexity of attention score evaluation and embedding aggregation is hence $O(n\sqrt{n})$ for all regions in a layer.

Finally, we concatenate the results of multi-heads and the embedding of $r_i$ is computed as 
\begin{equation}
    \hat{\mathcal{L}}^{t_j}_i = Concat(\hat{\mathcal{L}}^{t_j}_{i,1}, \cdots, \hat{\mathcal{L}}^{t_j}_{i,M}) \cdot W^O,
\end{equation}
where $Concat(\cdot)$ is the concatenation operation, $\hat{\mathcal{L}}^t_i \in \mathbb{R}^{1 \times d}$ is the embedding of $r_i$, $W_O \in \mathbb{R}^{(d \times M) \times 1}$ are learnable parameters and $M$ is the number of heads.

\begin{figure}
    \centering
    \includegraphics[width=0.75\linewidth]{./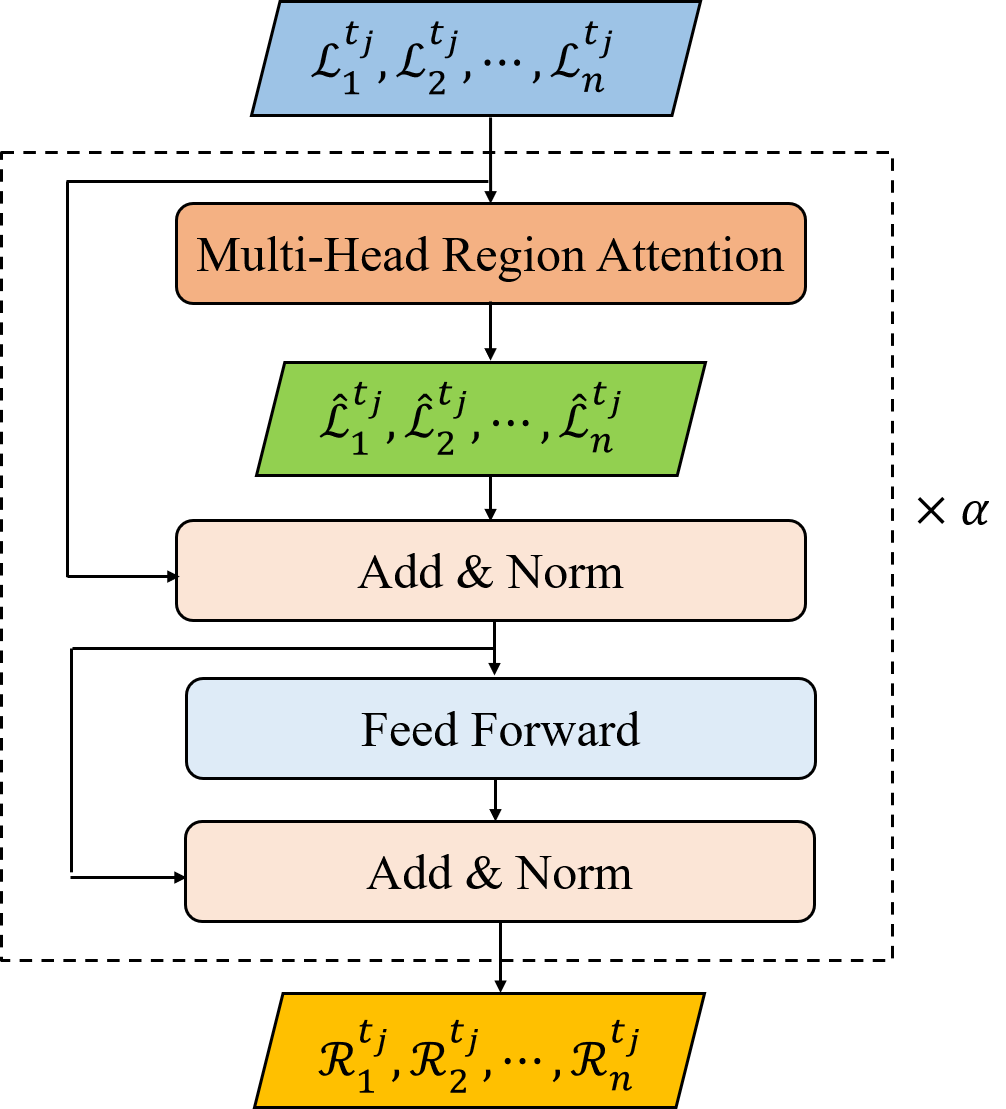}
    \caption{Processing of dependency learning for individual time slot.}
    \label{fig:DLI}
\end{figure}

Following the structure of Transformer\cite{vaswani2017attention}, the output of the multi-head region attention layers $\hat{\mathcal{L}}_i^{t_j}$ is then passed to a fully connected neural network~(Figure \ref{fig:DLI}). We also employ a residual connection between each of the two layers. 
As we discussed in Section \ref{sec:region sampling module}, the information propagation is achieved with a multi-layer network structure. Thus, we stack the layers $\alpha$ times~(the effect of $\alpha$ will be discussed in Section \ref{sec:layer_num}). We denote the final output of the DLI as $\mathcal{R}^{t_j} = \{\mathcal{R}_1^{t_j}, \mathcal{R}_2^{t_j}, \cdots, \mathcal{R}_n^{t_j}\}$, where $\mathcal{R}_i^{t_j}$ is the embedding of region $r_i$ at $t_j$.

Compared with the canonical Transformer, we only need to compute the attention scores and aggregate the embedding between adjacent nodes in the region connected graph. The computation complexity is hence reduced from $O(n^2)$ to $O(n\sqrt{n})$ for each layer.


\subsection{Dependency Learning over Multiple Time Slots~(DLM)}
\label{sec:dependency learning over multiple time slots}

Considering the spatial-temporal dependency may have periorical characteristic, DLM learns the periodic dependency by evaluating the correlation between $\mathcal{R}_n^{t}$ and the dependency at other historical time slots $\mathcal{R}_i^{\hat{t}}$~(where $\hat{t} < t$). After that, it generates a new embedding for $r_i$ by aggregating the embedding at different time slots according to their correlations.

Specifically, we consider the short-term and long-term period for traffic data in this work. The spatial-temporal dependency at the following historical time slots are used as the model input to predict the inflow and outflow of regions at $t$: spatial-temporal dependency in the recent $h$ time slots~(i.e., short-term period); the same time interval in the recent $l$ days~(i.e.,long-term period).

We employ a point-wise aggregation with self-attention to evaluate their correlations and aggregate the embedding accordingly. To capture the multiple periodic dependency, we use an multi-head attention network in DLM.

Given a set of historical time slot $Q$, the $z$-th dependency on a historical time slot $\hat{t} \in Q$  is calculated as follows:
\begin{equation}
    e_z(t, \hat{t}) = \frac{(\mathcal{R}_i^t \cdot W_{Q_z}^\mathcal{T}) \cdot ( \mathcal{R}_{i}^{\hat{t}} \cdot W_{K_z}^\mathcal{T})^T}{\sqrt{d}}, 
\end{equation}
where $W_{Q_z}^\mathcal{T} \in \mathbb{R}^{d\times d}$ and $W_{K_z}^\mathcal{T} \in \mathbb{R}^{d \times d}$ are learnable parameters. 

We use a softmax function to normalize the dependency and aggregate the context of each time slot by weight:
\begin{equation}
    \beta_z(t, \hat{t}) = softmax(e_z(t, \hat{t})) = \frac{\exp(e_z(t,\hat{t}))}{\sum_{t_p \in Q} \exp(e_z(t,t_p))}. 
\end{equation}

The aggregation of the $z$-th head is hence 
\begin{equation}
    \hat{\mathcal{R}}_{i,z}^{t} = (\sum_{\hat{t} \in Q} (\beta_z(t,\hat{t}) \cdot \mathcal{R}_{i}^{\hat{t}}) \cdot W_V^\mathcal{T},
\end{equation}
where $Q$ is the set of historical time slots, $\hat{\mathcal{R}}_{i,z}^{t}$ is the aggregation result of the $z$-th head, and $W_V^\mathcal{T} \in \mathbb{R}^{d \times d}$ are learnable parameters. Finally, we concatenate the results of different heads with

\begin{equation}
    \hat{\mathcal{R}}_{i}^{t} = Concat(\hat{\mathcal{R}}_{i,1}^{t}, \hat{\mathcal{R}}_{i,2}^{t}, \cdots, \hat{\mathcal{R}}_{i,Z}^{t}) \cdot W_\mathcal{T} ,
\end{equation}
where  $Concat(\cdot)$ is the concatenation operation, and $W_\mathcal{T} \in \mathbb{R}^{(Z \times d) \times d}$ are learnable parameters. We then pass $\hat{\mathcal{R}}_{i}^{t}$ to a fully connected neural network to obtain the spatial-temporal embedding of $r_i$ at $t$. Note that we also employ a residual connection between each of the two layers to avoid gradient exploding or vanishing. The output of the DLM is denoted as $\Omega_{i}^{t}$ for region $r_i$ at $t$.

Different from prior works, the periodic dependency learning is conditional on the spatial-temporal dependency at each individual time slot. Consequently, the spatial and temproal dependencies are jointly considered during the periodic dependency learning. The computation complexity is $O(|Q|)$, where $|Q|$ is the number of historical time slots used for learning.

\subsection{Prediction Network~(PN)}
\label{sec:forecasting_layer}
Given the spatial-temporal embedding $\mathcal{T}_{r_i}^{t}$ of a region, the prediction network predicts the inflow and outflow using the fully connected network. The forecasting function is defined as 
\begin{equation}
    [\hat{\mathcal{I}}^t_{i}, \hat{\mathcal{O}}^t_{i}] = \sigma(\Omega_{i}^{t} \cdot W^P + b^P),
\end{equation}
where $\hat{\mathcal{I}}^t_{i}$ and $\hat{\mathcal{O}}^t_{i}$ is the forecasting inflow and outflow respectively, $\sigma(\cdot)$ is the ReLU activation function and $W^P \in \mathbb{R}^{d \times 2}$, and $b^P \in \mathbb{R}^{1 \times 2}$ are learnable parameters.

We simultaneously forecast the inflow and outflow in our work, and define the loss function as follows:
\begin{equation}
    LOSS = \sqrt{\frac{\sum_{i=1}^{n}(\mathcal{I}^t_{i} - \hat{\mathcal{I}}^t_{i})^2 + \sum_{i=1}^{n}(\mathcal{O}^t_{i} - \hat{\mathcal{O}}^t_{i})^2}{2n}},
\end{equation}
where $n$ is the number of regions.




\section{Illustrative Experimental Results}
\label{sec:evaluation}
In this section, we first introduce the datasets and the data processing approaches in Section~\ref{sec:dataset}, and the evaluation metrics and baseline approaches in Section~\ref{sec:metric}. Then, we compare the accuracy and training efficiency of \modelname{} with the state-of-the-art methods in Sections~\ref{sec:effectiveness} and \ref{sec:efficiency}, respectively.
After that, we evaluate the performance of variants of \modelname{} and the  effect of  surrounding observations in Sections~\ref{sec:Surrounding Observation} and~\ref{sec:variants}, respectively, followed by the discussion on the hyperparameters
of layer number in Section~\ref{sec:layer_num} and head number in Section~\ref{sec:head_num}.

\subsection{Datasets}
\label{sec:dataset}

We conduct extensive traffic study and model evaluations based on two real-world traffic flow datasets collected in New York City~(NYC), the NYC-Taxi dataset and the NYC-Bike dataset. Each dataset contains trip records, each of which consists of origin, destination, departure time, and arrival time. The NYC-Taxi dataset contains $22,349,490$ taxi trip records of NYC in 2015, from 01/01/2015 to 03/01/2015. The NYC-Bike dataset was collected from the NYC Citi Bike system from 07/01/2016
to 08/29/2016, and contains $2,605,648$ trip records. 

The city is split into $10 \times 20$ regions with a size of $1$km$\times1$km. The time interval is set as $30$ minutes for both datasets. 
The two datasets were pre-processed and released online\footnote{https://github.com/tangxianfeng/STDN/blob/master/data.zip} by the prior work~\cite{yao2019revisiting}. 

In both of the taxi dataset and bike dataset, we use data from the previous $40$ days as the training data, and the remaining $20$ days as the testing data. In the training data, we select $80\%$ of the training data to train our model and the remaining $20\%$ for validation. We use the Min-Max normalization to rescale the range of volume value in $[0, 1]$, and recover the result for evaluation after forecasting. In our experiments, we exclude the results of those regions the inflow or outflow of which is less than 10 when evaluating the model. It is a common practice used in industry and many prior works~\cite{yao2018deep,yao2019revisiting,li2019forecaster}. 

\subsection{Performance Metrics and Baseline Methods}
\label{sec:metric}
We use Root Mean Squared Errors~(RMSE) and Mean Average Percentage Error~(MAPE) as the evaluation metrics, which are defined as follows:
\begin{equation}
    RMSE =  \sqrt{\frac{\sum_{i=1}^{N} (y_i - \hat{y}_i)^2}{N}},
\end{equation}
\begin{equation}
    MAPE = \frac{1}{N} \sum_{i=1}^{N}\frac{|y_i - \hat{y}_i|}{y_i},
\end{equation}
where $y_i$ and $\hat{y}_i$ are the ground-truth and forecasting result of the $i$-th sample, and $N$ is the total number of samples.


\begin{table*}
    \caption{Comparison with the state-of-the-art methods.}
    \label{table: comparison_with_baselines}
    \centering
    \begin{tabularx}{0.8\textwidth}{ 
      | >{\centering\arraybackslash}X 
      || >{\centering\arraybackslash}X 
      | >{\centering\arraybackslash}X 
      | >{\centering\arraybackslash}X 
      | >{\centering\arraybackslash}X 
      | >{\centering\arraybackslash}X | }
    \hline
    \multirow{2}*{Dataset}&\multirow{2}*{Method}&\multicolumn{2}{c|}{Inflow}&\multicolumn{2}{c|}{Outflow}\\
    \cline{3-6}
     &  & RMSE  & MAPE & RMSE & MAPE  \\
    \hline
    \multirow{13}*{NYC-Taxi}&{HA}&{33.83}&{21.14\%}&{43.82}&{23.18\%}\\
    &{ARIMA}&{27.25}&{20.91\%}&{36.53}&{22.21\%}\\
    &{Ridge}&{24.38}&{20.07\%}&{28.51}&{19.94\%}\\
    &{XGBoost}&{21.72}&{18.70\%}&{26.07}&{19.35\%}\\
    &{MLP}&{22.08$\pm$0.50}&{18.31$\pm$0.83\%}&{26.67$\pm$0.56}&{18.43$\pm$0.62\%}\\
    &{ConvLSTM}&{23.67$\pm$0.20}&{20.70$\pm$0.20\%}&{28.13$\pm$0.25}&{20.50$\pm$0.10\%}\\
    &{ST-ResNet}&{21.63$\pm$0.25}&{21.09$\pm$0.51\%}&{26.23$\pm$0.33}&{21.13$\pm$0.63\%}\\
    &{STDN}&{19.05$\pm$0.31}&{16.25$\pm$0.26\%}&{24.10$\pm$0.25}&{16.30$\pm$0.23\%}\\
    &{ASTGCN}&{22.05$\pm$0.37}&{20.25$\pm$0.26\%}&{26.10$\pm$0.25}&{20.30$\pm$0.31\%}\\
    &{STGODE}&{21.46$\pm$0.42}&{19.22$\pm$0.36\%}&{27.24$\pm$0.46}&{19.30$\pm$0.34\%}\\
    &{STSAN}&{23.07$\pm$0.64}&{22.24$\pm$1.91\%}&{27.83$\pm$0.30}&{25.90$\pm$1.67\%}\\
    &{DSAN}&{18.32$\pm$0.39}&{16.07$\pm$0.31\%}&{24.27$\pm$0.30}&{17.70$\pm$0.35\%}\\
    \cline{2-6}
    &\textbf{ST-TIS}&{\textbf{17.73$\pm$0.23}}&{\textbf{14.65$\pm$0.32\%}}&{\textbf{21.96$\pm$0.13}}&{\textbf{14.83$\pm$0.76\%}}\\
    
    \hline
    \hline
    \multirow{13}*{NYC-Bike}&{HA}&{11.93}&{27.06\%}&{12.49}&{27.82\%}\\
    &{ARIMA}&{11.25}&{25.79\%}&{11.53}&{26.35\%}\\
    &{Ridge}&{10.33}&{24.58\%}&{10.92}&{25.29\%}\\
    &{XGBoost}&{8.94}&{22.54\%}&{9.57}&{23.52\%}\\
    &{MLP}&{9.12$\pm$0.24}&{22.40$\pm$0.40\%}&{9.83$\pm$0.19}&{23.12$\pm$0.24\%}\\
    &{ConvLSTM}&{9.22$\pm$0.19}&{23.20$\pm$0.47\%}&{10.40$\pm$0.17}&{25.10$\pm$0.45\%}\\
    &{ST-ResNet}&{8.85$\pm$0.13}&{22.98$\pm$0.53\%}&{9.80$\pm$0.12}&{25.06$\pm$0.36\%}\\
    &{STDN}&{8.15$\pm$0.15}&{20.87$\pm$0.39\%}&{8.85$\pm$0.11}&{21.84$\pm$0.36\%}\\
    &{ASTGCN}&{9.05$\pm$0.31}&{22.25$\pm$0.36\%}&{9.34$\pm$0.24}&{23.13$\pm$0.30\%}\\
    &{STGODE}&{8.58$\pm$0.38}&{23.33$\pm$0.26\%}&{9.23$\pm$0.31}&{23.99$\pm$0.23\%}\\
    &{STSAN}&{8.20$\pm$0.45}&{20.42$\pm$1.33\%}&{9.87$\pm$0.23}&{23.87$\pm$0.71\%}\\
    &{DSAN}&{7.97$\pm$0.25}&{20.23$\pm$0.18\%}&{10.07$\pm$0.58}&{23.92$\pm$0.39\%}\\
    \cline{2-6}
    &\textbf{ST-TIS}&{\textbf{7.57$\pm$0.04}}&{\textbf{18.64$\pm$0.23\%}}&{\textbf{7.73$\pm$0.10}}&{\textbf{18.58$\pm$0.19\%}}\\
    \hline
    \end{tabularx}
    \end{table*}
    
We compare our model with the following state-of-the-art approaches: 
\begin{itemize}
    \item {\em Historical average~(HA):} It uses the average of traffic at the same time slots in historical data for prediction.
    \item {\em ARIMA:} It is a conventional approach for time series data forecasting.
    \item {\em Ridge Regression:} A regression approach for time series data forecasting.
    \item {\em XGBoost~\cite{chen2016xgboost}:} A powerful approach for building supervised regression models.
    \item {\em Multi-Layer Perceptron~(MLP):} A three-layer fully-connected
    neural network.
    \item {\em Convolutional LSTM~(ConvLSTM)~\cite{xingjian2015convolutional}:} It is a special recurrent neural network with a convolution structure for spatial-temporal prediction.
    \item {\em ST-ResNet~\cite{zhang2017deep}:} It uses multiple convolutional networks with residual structures to capture spatial correlations from different temporal periods for traffic forecasting. It also considers external data such as weather, holiday events, and metadata.
    \item {\em STDN~\cite{yao2019revisiting}:} It considers the dynamic spatial correlation and temporal shifted problem using the combination of CNN and LSTM. External data such as weather and event are considered in the work.
    \item {\em ASTGCN~\cite{guo2019attention}:} It is an attention-based spatial-temporal graph convolutional network (ASTGCN) model to solve traffic flow forecasting problem.
    \item {\em STGODE~\cite{fang2021spatial}:} It uses a spatial-temporal graph ordinary differential equation network to predict traffic flow based on two predefine graph, namely a spatial graph in terms of distance, and a semantic graph in terms of flow similarity.
    \item {\em STSAN~\cite{lin2020interpretable}:} It uses CNN to capture spatial information and the canonical Transformer to consider the temporal dependencies over time. In particular, transition data between regions are used to indicate the correlation between regions. 
    \item {\em DSAN~\cite{lin2020preserving}:} It uses the canonical Transformer to capture the spatial-temporal correlations for spatial-temporal prediction, in which transition data between regions are used for correlation modeling.
\end{itemize}

We use the identical datasets and data process approach as the work STDN~\cite{yao2019revisiting}, and use the results of the work~\cite{yao2019revisiting} as the benchmark for discussion. The experiment results of (1) $\sim$ (8) in Table \ref{table: comparison_with_baselines} are reported in the work~\cite{yao2019revisiting}. 
The evaluation of ASTGCN, STGODE, STSAN, and DSAN is based on the code from their authors' GitHubs.

We implement our model using PyTorch. Data and code can be found in https://github.com/GuanyaoLI/ST-TIS. We use the following data as the model input since they achieve the best performance on the validation datasets: data in the recent past 3 hours~(i.e., $h = 6$ as the slot duration is $30$ minutes); the same time slot in the recent past 10 days~(i.e., $l = 10$). 
The other default hyperparameter settings are as follows. 
The default length $w$ of the surrounding observations is $6$~(i.e. $3$ hours), the number of convolution kernels $F$ is $4$, and the dimension $d$ is set as $8$. The number of $k$ for DLI in Figure \ref{fig:DLI} is set as $3$, and the number of heads is set as $6$ for the two modules. 
Furthermore, the dropout rate is set as $0.1$, the learning rate is set as $0.001$, and the batch size is set to be $32$. Adam optimizer is used for model training. We trained our model on a machine with a NVIDIA RTX2080 Ti GPU.


\subsection{Prediction Accuracy}
\label{sec:effectiveness}

We compared the accuracy of \modelname{} with the state-of-the-art methods using the metrics RMSE and MAPE. The results for the two datasets are presented in Table \ref{table: comparison_with_baselines}. Each approach was run $10$ times, and the mean and standard deviation are reported. As shown in the table, \modelname{} significantly outperforms all other approaches on both metrics and datasets.  

Specifically, the performance of the conventional time series forecasting approaches~(HA and ARIMA) is poor for both datasets because these approaches do not consider the spatial dependency. Conventional machine learning approaches~(Ridge, XGBoost, and MLP), which consider spatial dependency as features, have better performance than HA and ARIMA. However, they fail to consider the joint spatial-temporal dependencies between regions. Most deep learning-based models have further improvements than conventional works, illustrating the ability of deep neural networks to capture the complicated spatial and temporal dependency. 
%
\modelname{} is substantially more accurate than state-of-the-art approaches~(i.e., ConvLSTM, STResNet, STDN, ASTGCN and STGODE). For example, it has an average improvement of $9.5\%$ on RMSE and $12.4\%$ on MAPE compared to STDN and DSAN.
The reasons for the improvements are that it can capture the correlations between both nearby and distant regions, and it considers the spatial and temporal dependency in a joint manner. We find that the improvement is more significant on the NYC-Taxi dataset than on the NYC-Bike dataset. The reason could be that people prefer using taxis instead of bikes for long-distance travel, and so the correlations with distant regions are more important for the prediction task on the taxi dataset.  The significant improvement demonstrates that \modelname{} has a better ability to capture the correlations for distant regions than the other approaches which have the spatial locality assumption. \modelname{} also outperforms other Transformer-based approaches~(such as STSAN and DSAN). The reason is that with the information fusion and region sampling strategies in \modelname{}, the long-tail issue of the canonical Transformer is addressed and the joint spatial-temporal correlations are considered. The significant improvements demonstrate the effectiveness of our proposed model.

\subsection{Training Efficiency}
\label{sec:efficiency}

\begin{table}
\caption{Comparison of training time.}
\label{table: comparison_of_running_time}
\centering
\begin{tabular}{|c||c|c|c|}
\hline
                        Dataset  & Method  & \begin{tabular}[c]{@{}c@{}}Average time\\  per epoch~(s)\end{tabular} & Total time~(s) \\ \hline
\multirow{7}{*}{NYC-Taxi} & ST-ResNet & 7.31     & 3077.51    \\ \cline{2-4} 
& STDN    & 445.47            & 34746.66   \\ \cline{2-4}  
& ASTGCN & 25.31 & 6272.88 \\ \cline{2-4}  
& STGODE & 18.53 & 3423.48 \\ \cline{2-4}                      
& STSAN & 426.75  & 33769.32   \\ \cline{2-4}
& DSAN & 386.17  & 29390.75    \\ \cline{2-4}
& \textbf{ST-TIS}  & {10.21}  & \textbf{1231.5}    \\ \hline
\multirow{7}{*}{NYC-Bike} & ST-ResNet & 7.25   & 2921.75     \\ \cline{2-4}
& STDN  & 480.21  & 23066.43      \\ \cline{2-4}
& ASTGCN & 25.68 & 5084.64 \\ \cline{2-4}   
& STGODE & 18.76 &  3752.89\\ \cline{2-4}                   
& STSAN & 426.28  & 31216.28    \\ \cline{2-4} 
& DSAN & 434.03   & 26476.20    \\ \cline{2-4}                      
& \textbf{ST-TIS}  & {10.37}  & \textbf{1556.8}  \\ \hline
\end{tabular}
\end{table}

Training and deploying large deep learning models is costly.
For example, the cost of trying combinations of different hyper-parameters for a large model is computationally expensive and it highly relies on training resources~\cite{menghani2021efficient}.
Thus, we compare the training efficiency of \modelname{} with some state-of-the-art deep learning based approaches~(i.e., ST-ResNet, STDN, ASTGCN, STGODE, STSAN, and DSAN) in terms of training time and number of learnable parameters.

The results of the average training time per epoch and the total training time are presented in Table \ref{table: comparison_of_running_time}. 
\begin{table}
    \caption{Comparison of the number of parameters.}
    \label{table: comparison_of_number_of_parameters}
    \centering
    \begin{tabular}{|c|c|}
    \hline
    Method  & Number of parameters \\ \hline
    ST-ResNet & 4,917,041 \\ \hline
    STDN    & 9,446,274            \\ \hline
    ASTGCN & 450,031 \\ \hline
    STGODE &  433,073\\ \hline
    DSAN  & 1,621,634           \\ \hline
    STSAN  & 34,327,298           \\ \hline
    \textbf{ST-TIS}  & \textbf{139,506} \\ \hline
    \end{tabular}
    \end{table}
ST-ResNet achieves the least training time among all comparison approaches because it solely employs simple CNN and does not rely on RNN for temporal dependency learning. The average time per epoch of \modelname{} is close to ST-ResNet. In addition, \modelname{} is trained significantly faster than other approaches~(with a reduction of $46\% \sim 95\%$).
STDN uses LSTM to capture temporal correlation, which
is in general more difficult to be trained in parallel. STSAN and DSAN also employ Transformer with self-attention for spatial and temporal correlation learning, but our proposed approach is significantly efficient than them, illustrating the efficiency of the proposed region graph for model training. 

Furthermore, we also compare the number of learnable parameters of each model in Table \ref{table: comparison_of_number_of_parameters}. More parameters may lead to difficulties in model training, and it requires more memory and training resources. Compared with other state-of-the-art approaches, \modelname{} is much more lightweight with fewer parameters for training~(with a reduction of $23\% \sim 98\%$). The comparison results in Tables \ref{table: comparison_with_baselines}, \ref{table: comparison_of_running_time} and \ref{table: comparison_of_number_of_parameters} demonstrate that our proposed \modelname{} is faster and more lightweight than other deep learning based baseline approaches, while achieving even better prediction accuracy.

\subsection{Design Variations of \modelname{}}
\label{sec:variants}

\begin{figure}
    \noindent\makebox[\linewidth][c]{
        \subfigure[\small RMSE.]{
            \label{fig:taxi_variant_rmse}
            \includegraphics[width=0.24\textwidth]{./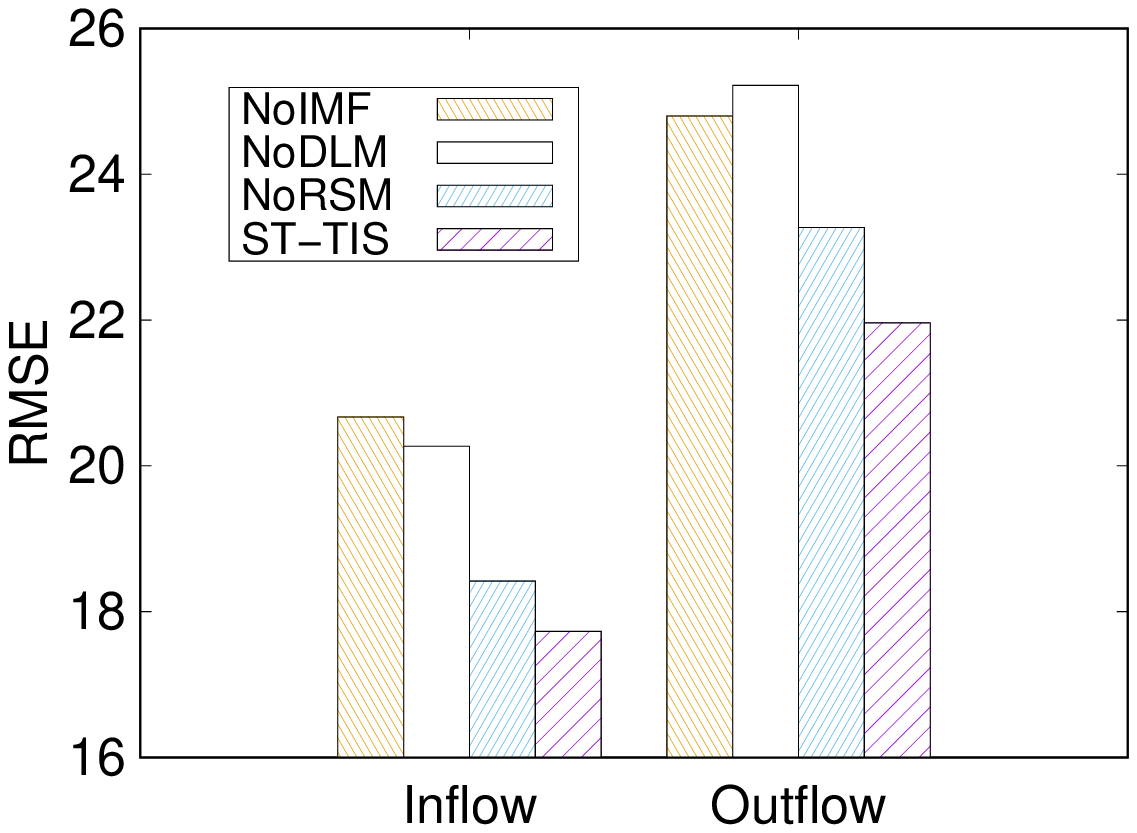}
        }
        \subfigure[\small MAPE.]{
            \label{fig:taxi_variant_mape}
            \includegraphics[width=0.24\textwidth]{./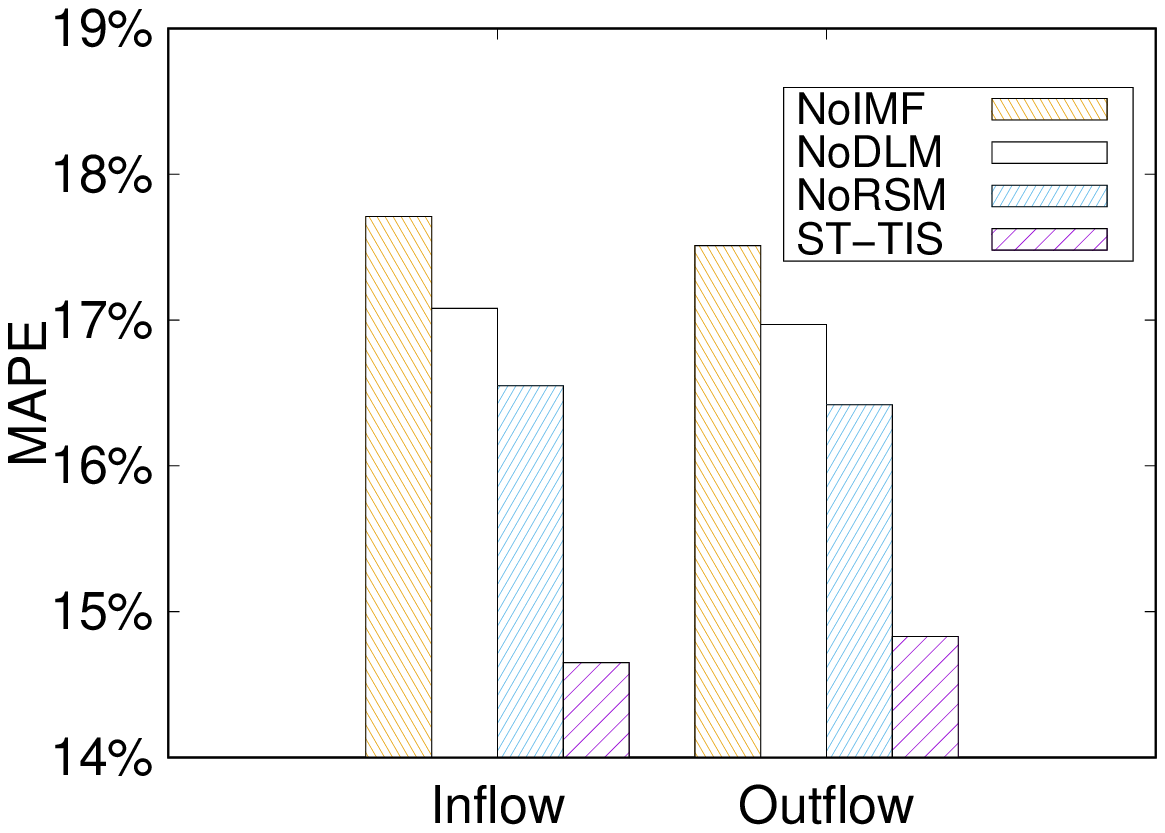}
        }
        
    }
    \caption{Performance of variants on the NYC-Taxi dataset.}
    \end{figure}
    
    \begin{figure}
    \noindent\makebox[\linewidth][c]{
        \subfigure[\small RMSE.]{
            \label{fig:bike_variant_rmse}
            \includegraphics[width=0.24\textwidth]{./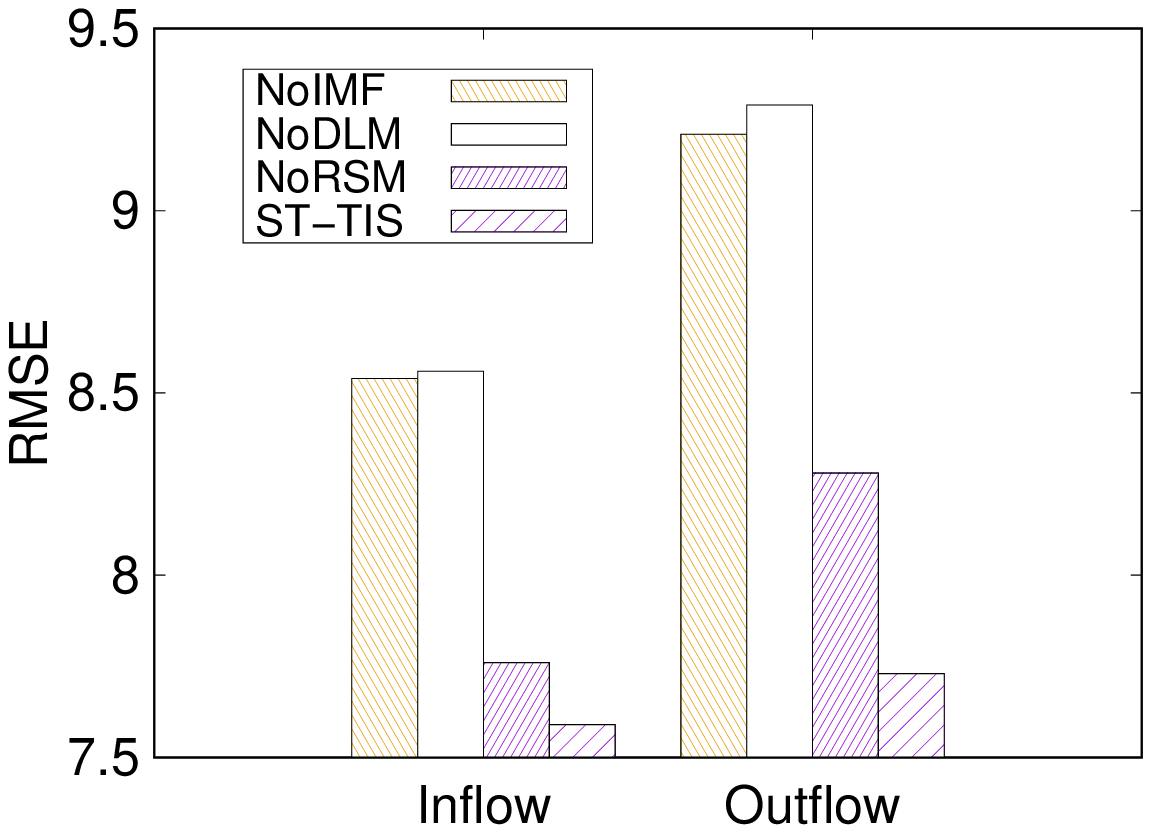}
        }
        \subfigure[\small MAPE.]{
            \label{fig:bike_variant_mape}
            \includegraphics[width=0.24\textwidth]{./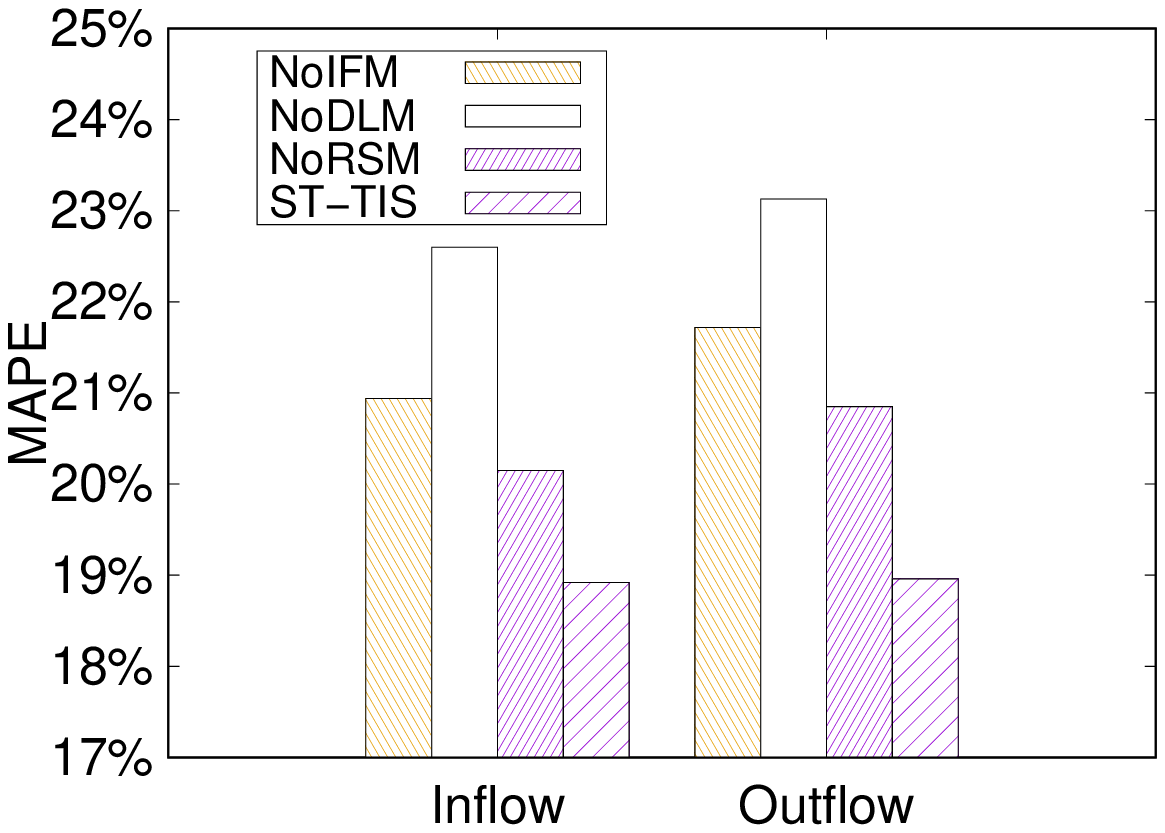}
        }
        
    }
    \caption{Performance of variants on the NYC-Bike dataset.}
    \end{figure}

We compare \modelname{} with its variants to evaluate the effectiveness of the proposed modules. The following variants are discussed: 
 \begin{itemize}
    \item \textit{NoIFM}: We remove the information fusion module from \modelname{}. Only the surrounding observation of a time slot is used as the input of the model instead of the fusion result.
    \item \textit{NoRSM}: We remove the region sampling module from \modelname{}. The canonical Transformer is used to capture the dependencies between regions without region sampling.
    \item \textit{NoDLM}: We remove the module of dependency learning over multiple time slots, and only use $\mathcal{R}^t_i$ as the input of the prediction network for prediction.
 \end{itemize}

The RMSE and MAPE on the NYC-Taxi dataset are presented in Figures \ref{fig:taxi_variant_rmse} and \ref{fig:taxi_variant_mape}, respectively. After taking the information fusion module away, the performance degrades significantly. The reason is that the information fusion module plays a fundermental role in jointly considering the spatial-temporal dependency. Without such module, our approach would degenerate to consider the spatial and temporal dependency in a decouple manner. The experimental results demonstrate the importance of considering spatial-temporal dependency jointly and the effective of the proposed information fusion module. 
Moreover, without the region sampling module, our approach still achieve a good prediction performance because the canonical Transformer is good at capturing dependencies between regions. The performance is further improved with the region sampling since it could address the long-tail issue of the canonical Transformer for embedding aggregation. Furthermore, the RMSE and MAPE increase when the periodical characteristic of spatial-temporal dependency is not considered~(i.e., NoDLM), which indicates the necessity of considering the period of spatial-temporal dependency and demonstrates the effectiveness and rationality of our model design.
Similar and consistent findings can be observed on the NYC-Bike dataset in Figures \ref{fig:bike_variant_rmse} and \ref{fig:bike_variant_mape}.

\subsection{Surrounding Observations}
\label{sec:Surrounding Observation}

\begin{figure}
    \noindent\makebox[\linewidth][c]{
        \subfigure[\small RMSE.]{
            \label{fig:taxi_spatial_len_rmse}
            \includegraphics[width=0.24\textwidth]{./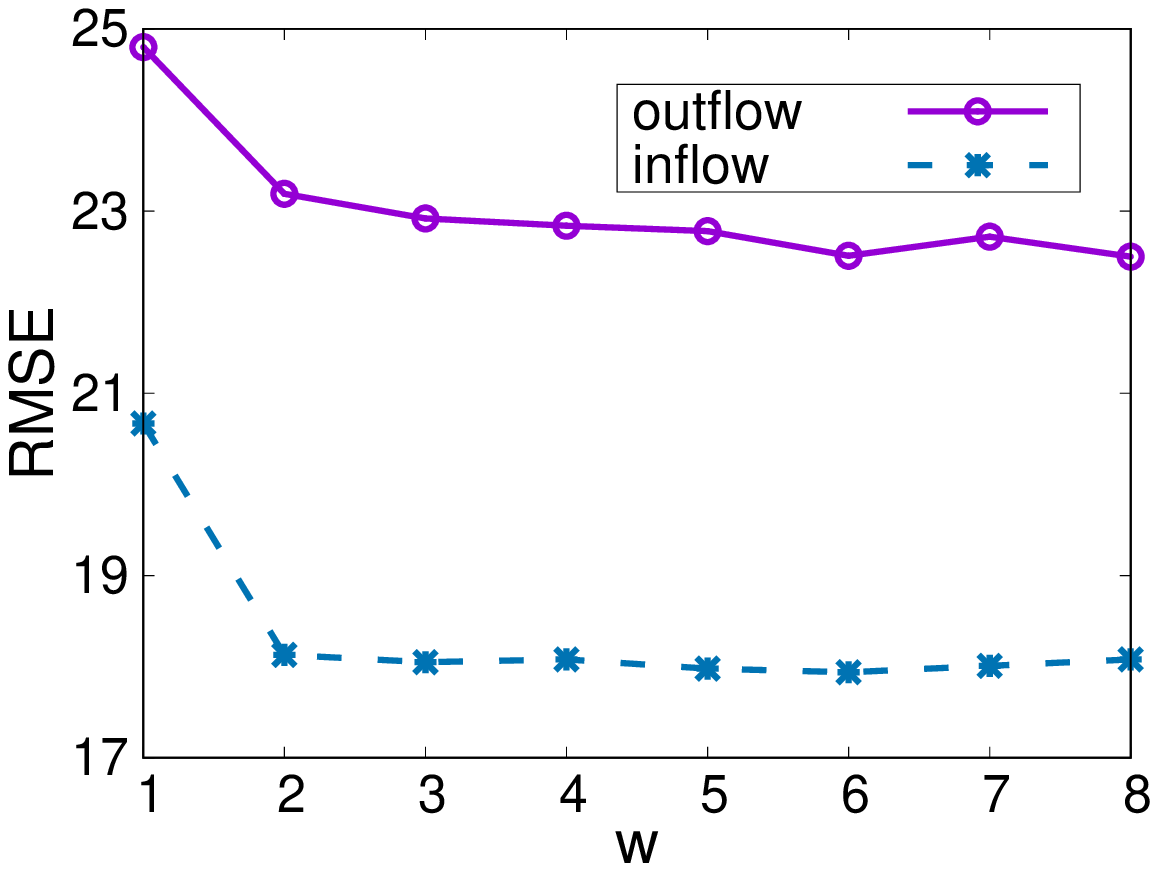}
        }
        \subfigure[\small MAPE.]{
            \label{fig:taxi_spatial_len_mape}
            \includegraphics[width=0.24\textwidth]{./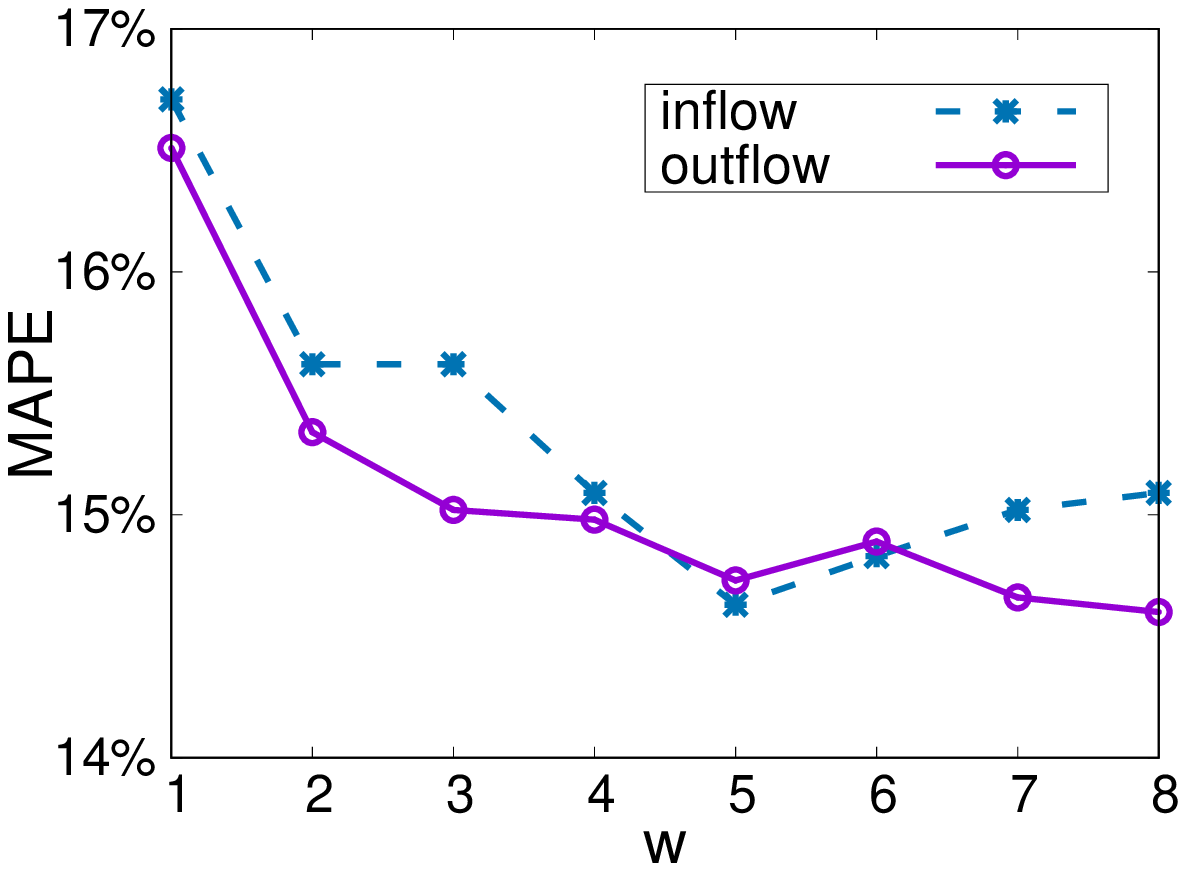}
        }
    }
    \caption{Impact of surrounding observations on the NYC-Taxi dataset.}
    \end{figure}

    \begin{figure}
    \noindent\makebox[\linewidth][c]{
        \subfigure[\small RMSE.]{
            \label{fig:bike_spatial_len_rmse}
            \includegraphics[width=0.24\textwidth]{./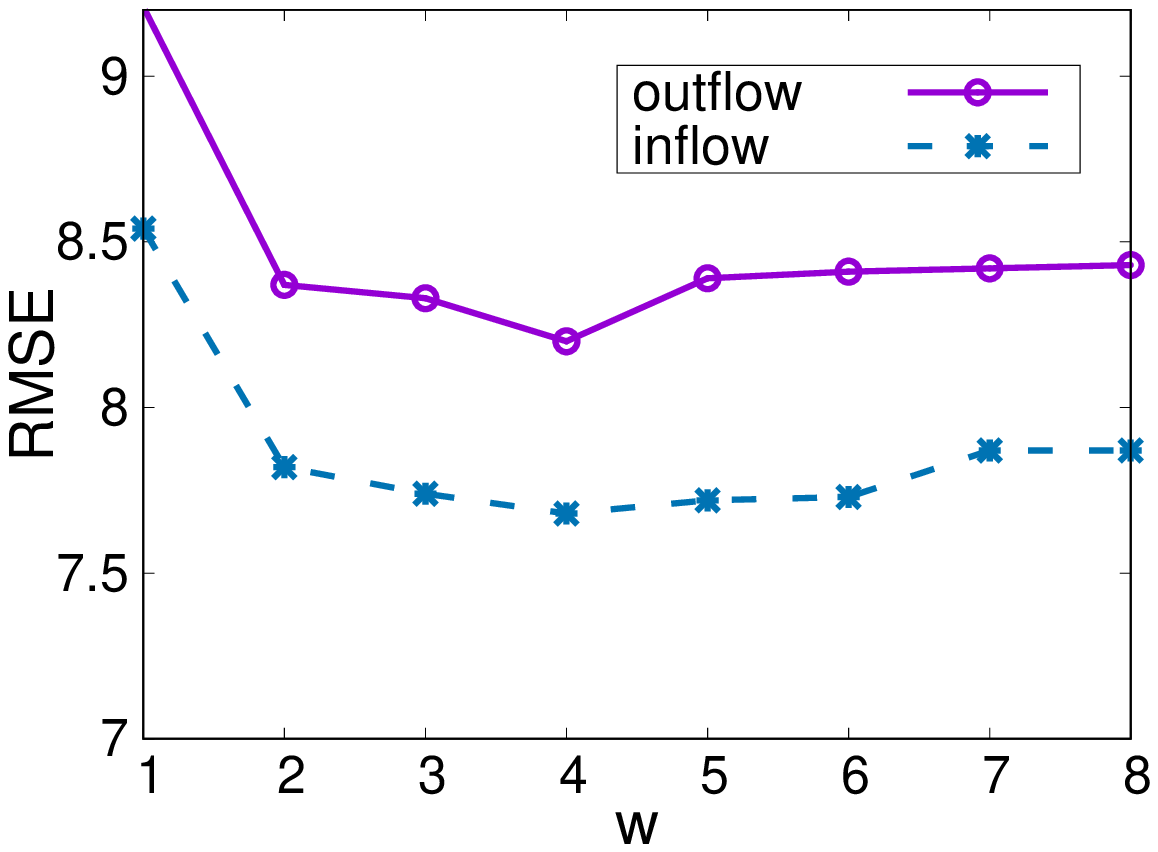}
        }
        \subfigure[\small MAPE.]{
            \label{fig:bike_spatial_len_mape}
            \includegraphics[width=0.24\textwidth]{./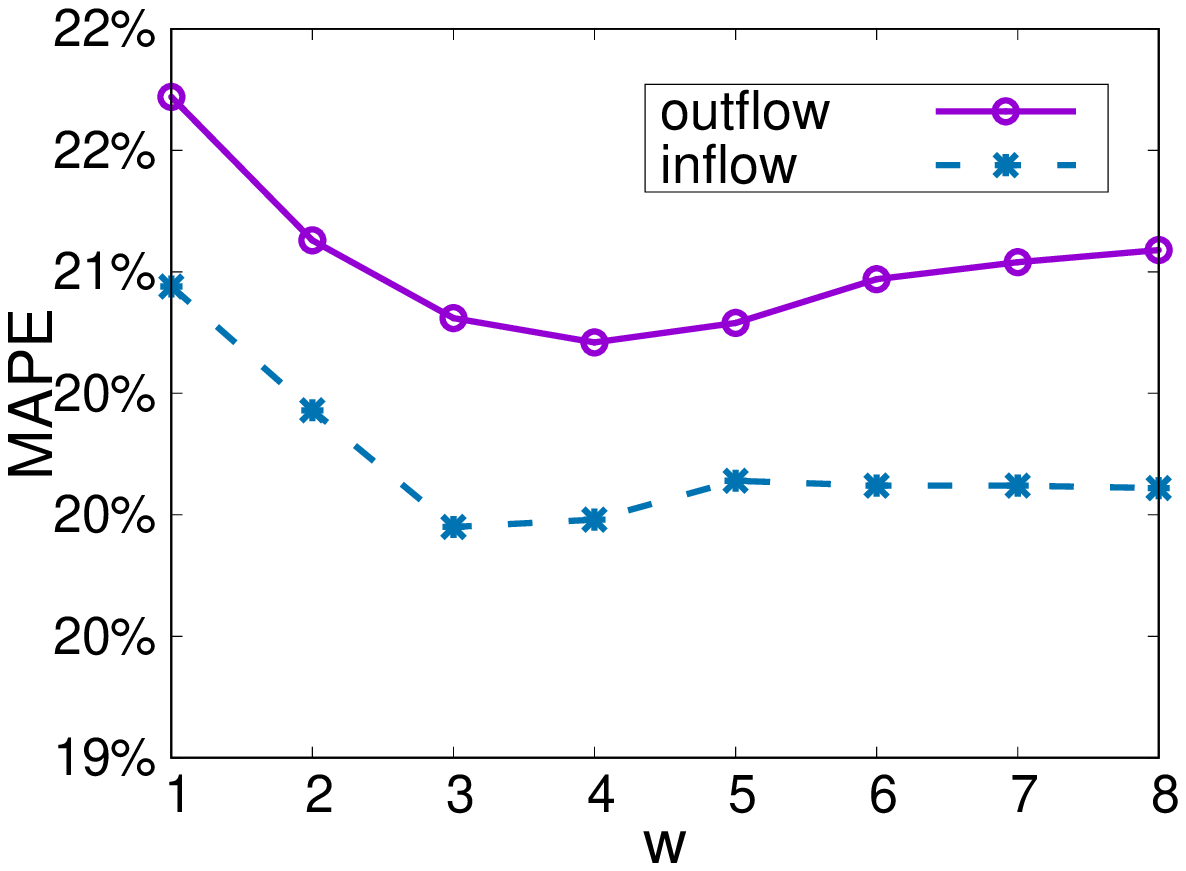}
        }
    }
    \caption{Impact of surrounding observations on the NYC-Bike dataset.}
    \end{figure}

Recall that the dependency between two regions may have temporal lagging due to the travel time between them. To capture such lagging dependency, \modelname{} uses the surrounding observations to learn the region correlations for each time slot, which contains the inflow and outflow in the previous $w$ time slots.
We thus evaluate the impact of $w$ on the performance of our model. 

Figures \ref{fig:taxi_spatial_len_rmse} and \ref{fig:taxi_spatial_len_mape} show the RMSE and MAPE versus different lengths on the NYC-Taxi dataset. 
A larger length $w$ indicates that more information is encoded from the surrounding observations. When $w = 1$, only the observation at a time slot is used for dependency learning, and the model fails to capture the lagging characteristic of dependency. 
As the length increases, the RMSE and MAPE of both inflow and outflow forecasting decrease~($w \leq 5$). 
The performance improvement demonstrates the importance of using the surrounding observations to learn the dependencies between regions.
RMSE and MAPE increase slightly but remain stable when the length is large~($w \ge 5$). The potential reason is that, when $w$ is larger than the travel time between regions, increasing $w$ would not introduce more information for dependency learning. On the other hand, a larger $w$ may introduce some noise and more parameters for the model, leading to difficulties in model training~\cite{lin2020preserving}. The RMSE and MAPE versus different lengths of surrounding observations on the NYC-Bike dataset are shown in Figures \ref{fig:bike_spatial_len_rmse} and \ref{fig:bike_spatial_len_mape}, which are consistent with the results for the NYC-Taxi dataset. When the length is small~($w \leq 4$), RMSE and MAPE decrease as the length becomes larger, but they slightly increase when the length is large~($w \ge 4$). 

\subsection{Layer Number}
\label{sec:layer_num}
\begin{figure}
    \noindent\makebox[\linewidth][c]{
        \subfigure[\small RMSE.]{
            \label{fig:taxi_layer_num_rmse}
            \includegraphics[width=0.24\textwidth]{./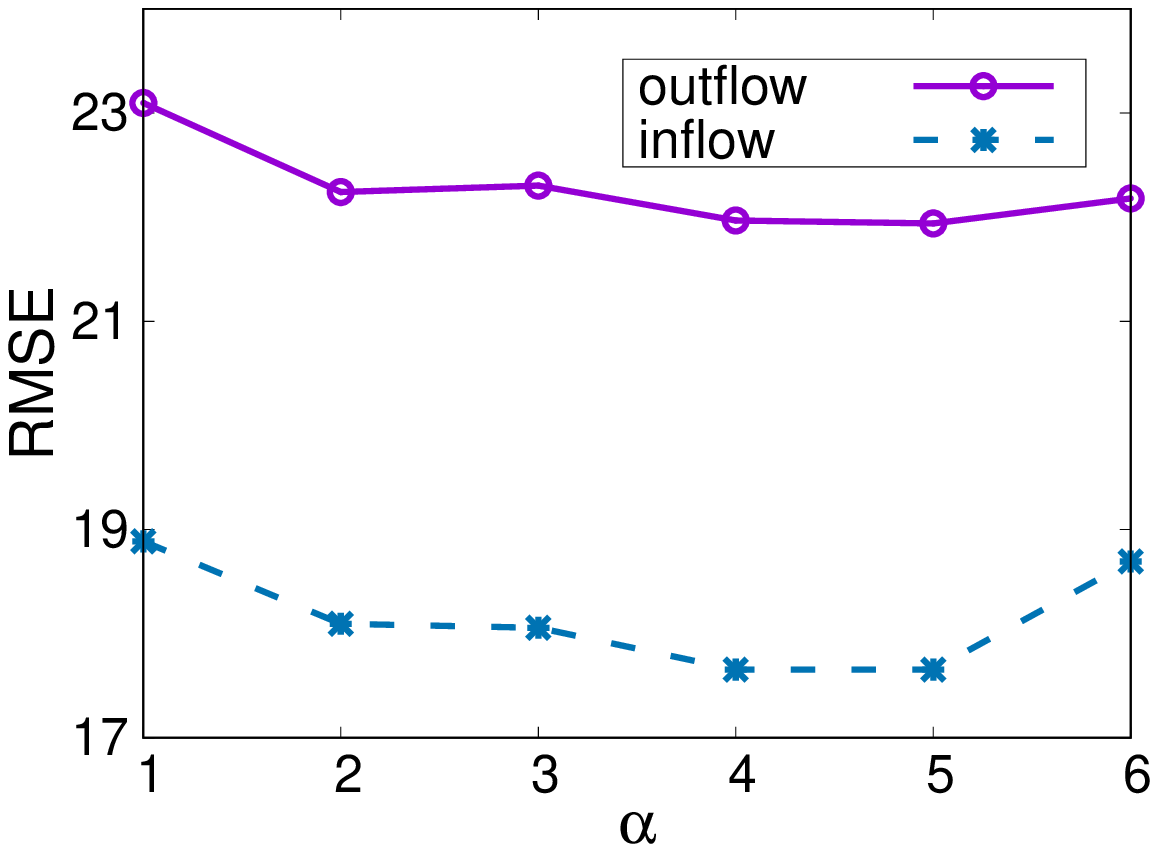}
        }
        \subfigure[\small MAPE.]{
            \label{fig:taxi_layer_num_mape}
            \includegraphics[width=0.24\textwidth]{./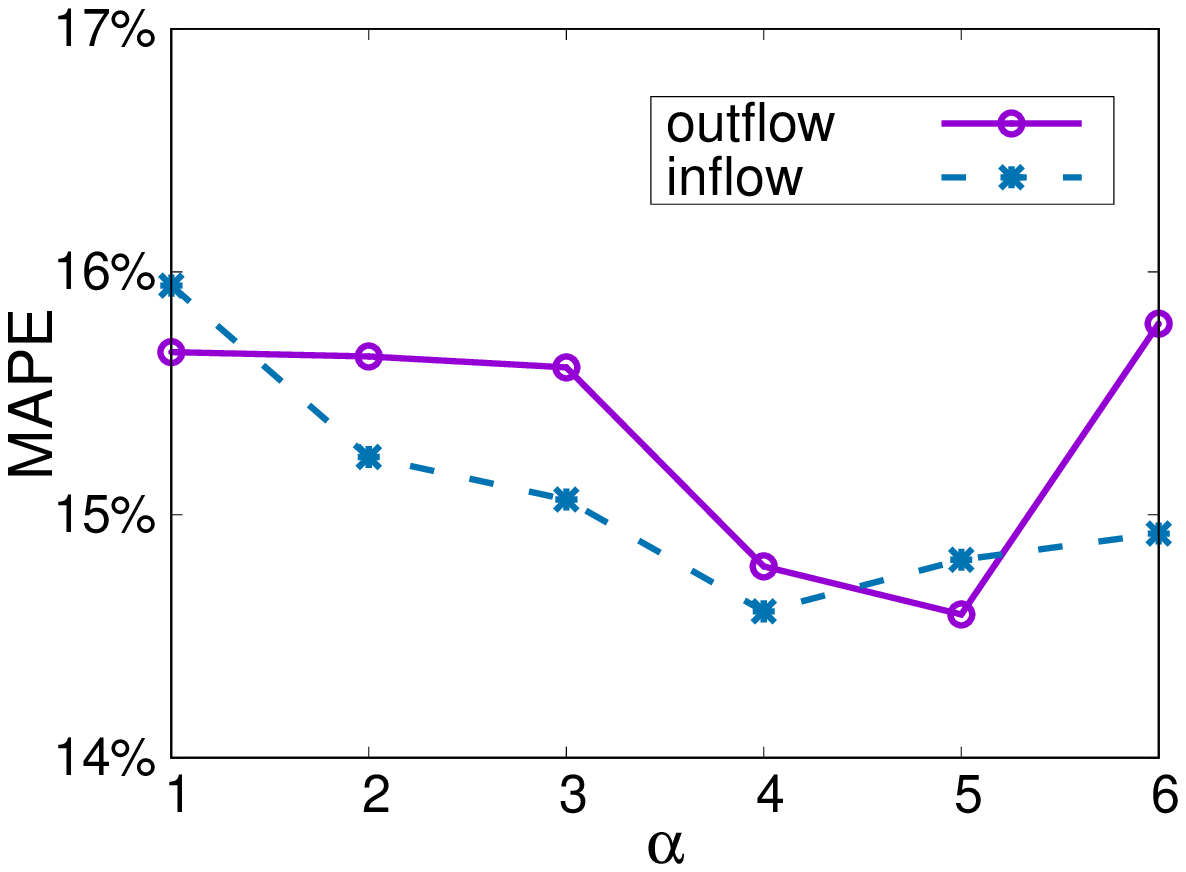}
        }
    }
    
    \caption{Impact of layer number on the NYC-Taxi dataset.}
    \label{fig:layer_number_taxi}
    \end{figure}
    
    \begin{figure}
    \noindent\makebox[\linewidth][c]{
        \subfigure[\small RMSE.]{
            \label{fig:bike_layer_num_rmse}
            \includegraphics[width=0.24\textwidth]{./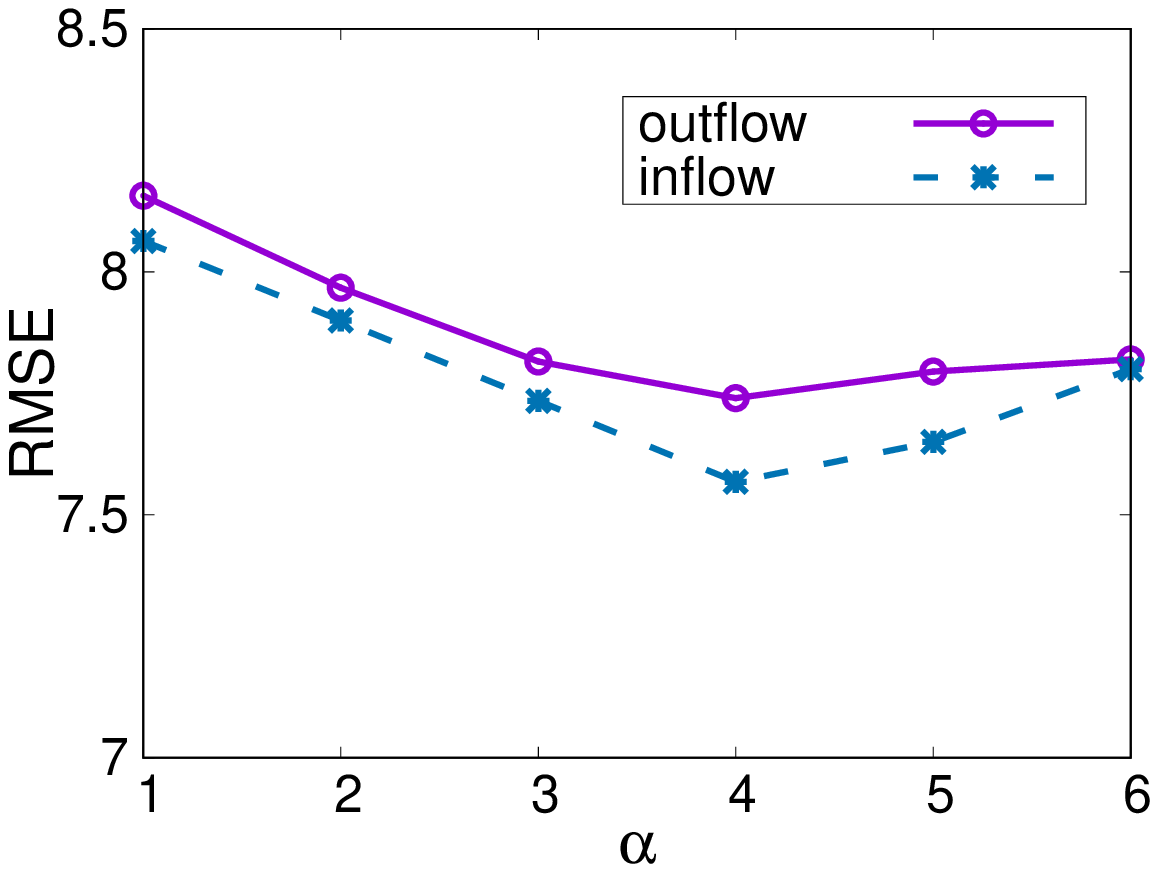}
        }
        \subfigure[\small MAPE.]{
            \label{fig:bike_layer_num_mape}
            \includegraphics[width=0.24\textwidth]{./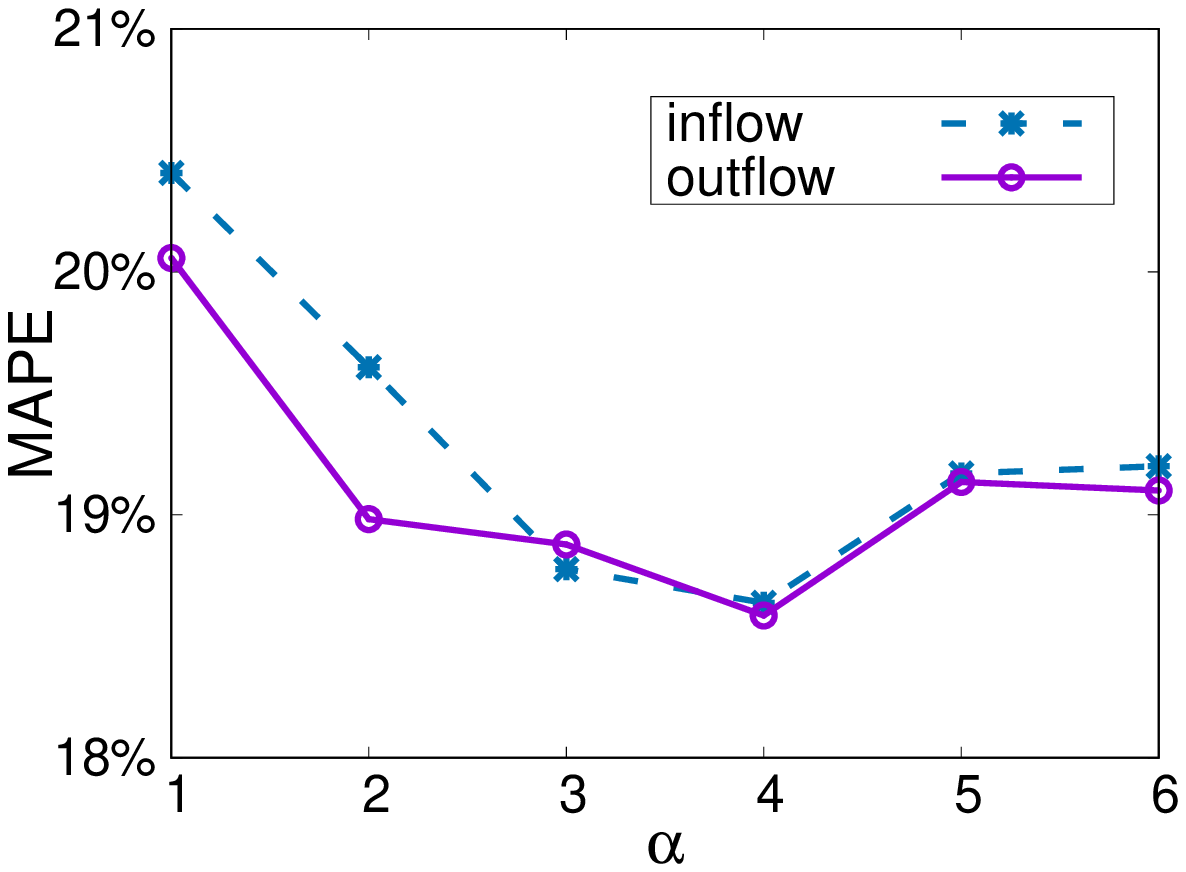}
        }
    }
    
    \caption{Impact of layer number on the NYC-Bike dataset.}
    \label{fig:layer_number_bike}
    \end{figure}

In \modelname{}, DLI is stacked $\alpha$ times to ensure the information propagtion between regions and improve the model robustness. We evaluate the effect of $\alpha$ on the prediction performance using RMSE and MAPE. The results for the taxi dataset are presented in Figure \ref{fig:layer_number_taxi}. When $\alpha = 1$, only the dependencies on one's neighbouring regions in the graph are considering, resulting in the worst prediction accuracy. When $\alpha \geq 2$, the dependencies on all other regions could be captured. We find that with the layer number $\alpha$ increases, the RMSE and the MAPE declines for both inflow and outflow, indicating the performance improvement. 
However, we find that
 when the layer number is too large~($\alpha > 5$ in our experiments), the performance on RMSE and MAPE degrades, because too many layers may result in difficulties of model training. Similar results are observed on the bike dataset~(Figure \ref{fig:layer_number_bike}).

\subsection{Head Number}
\label{sec:head_num}

We use the multi-head attention mechanism in \modelname{} for dependency learning, so that different heads could capture different patterns from the historical data. In our experiments, we evaluate the impact of the head number $M$ on the prediction performance. The results of RMSE and MAPE versus the head number $M$ on the taxi and bike datasets are presented in Figures \ref{fig:head_number_taxi} and \ref{fig:head_number_bike}, respectively. As shown in Figures \ref{fig:taxi_head_num_rmse} and \ref{fig:bike_head_num_rmse}, with the head number increases, the RMSE on the two datasets declines, demonstrating the multi-head mechanism could benefit the dependency learning and improve the prediction accuracy. We also observe that when the head number become larger~($M > 4$ on the taxi dataset while $M > 6$ on the bike dataset), the improvements are not significant. The reason could be that some heads may focus on the same pattern when there are many heads. In terms of MAPE, similar findings could be observed in Figures \ref{fig:taxi_head_num_mape} and \ref{fig:bike_head_num_mape}. Moreover, the MAPE increases slightly when the head number becomes large~($M > 6$). It is because increasing the head number leads to more learnable parameters, which would result in difficulties for model training.

\begin{figure}
    \noindent\makebox[\linewidth][c]{
        \subfigure[\small RMSE.]{
            \label{fig:taxi_head_num_rmse}
            \includegraphics[width=0.24\textwidth]{./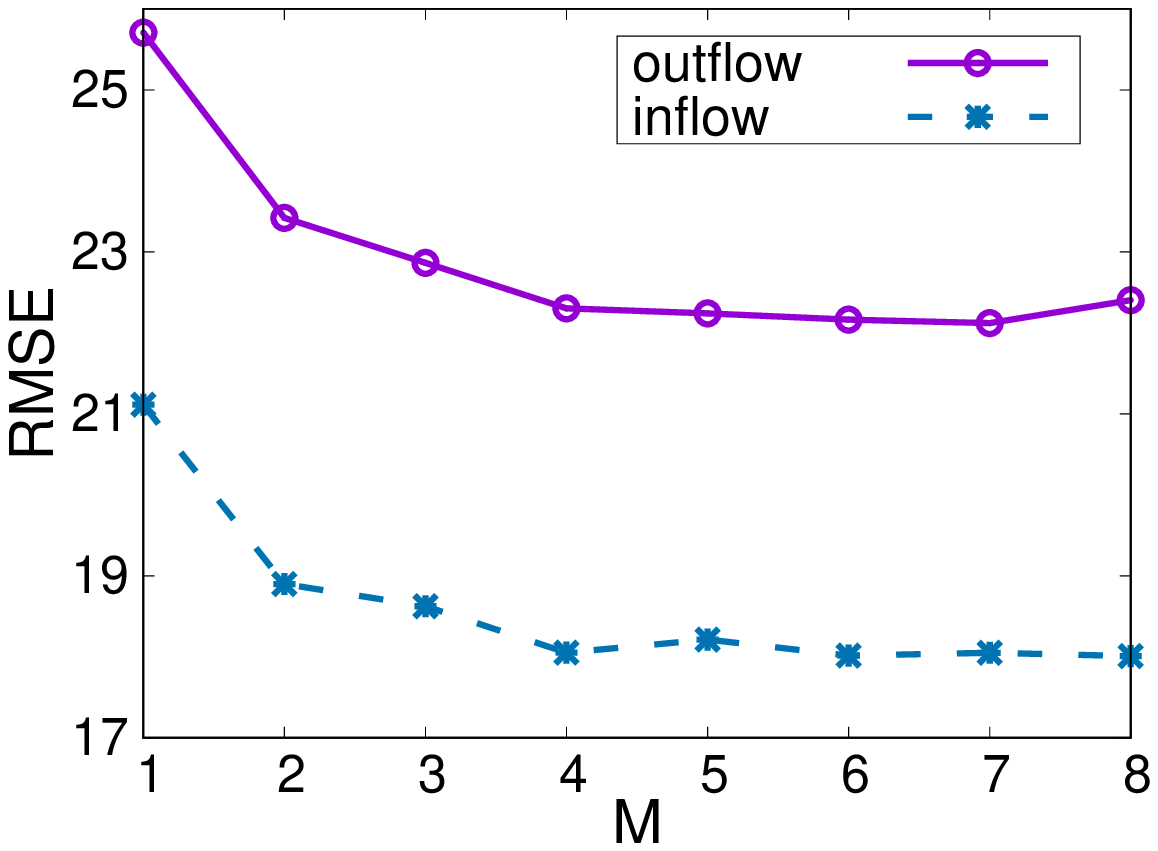}
        }
        \subfigure[\small MAPE.]{
            \label{fig:taxi_head_num_mape}
            \includegraphics[width=0.24\textwidth]{./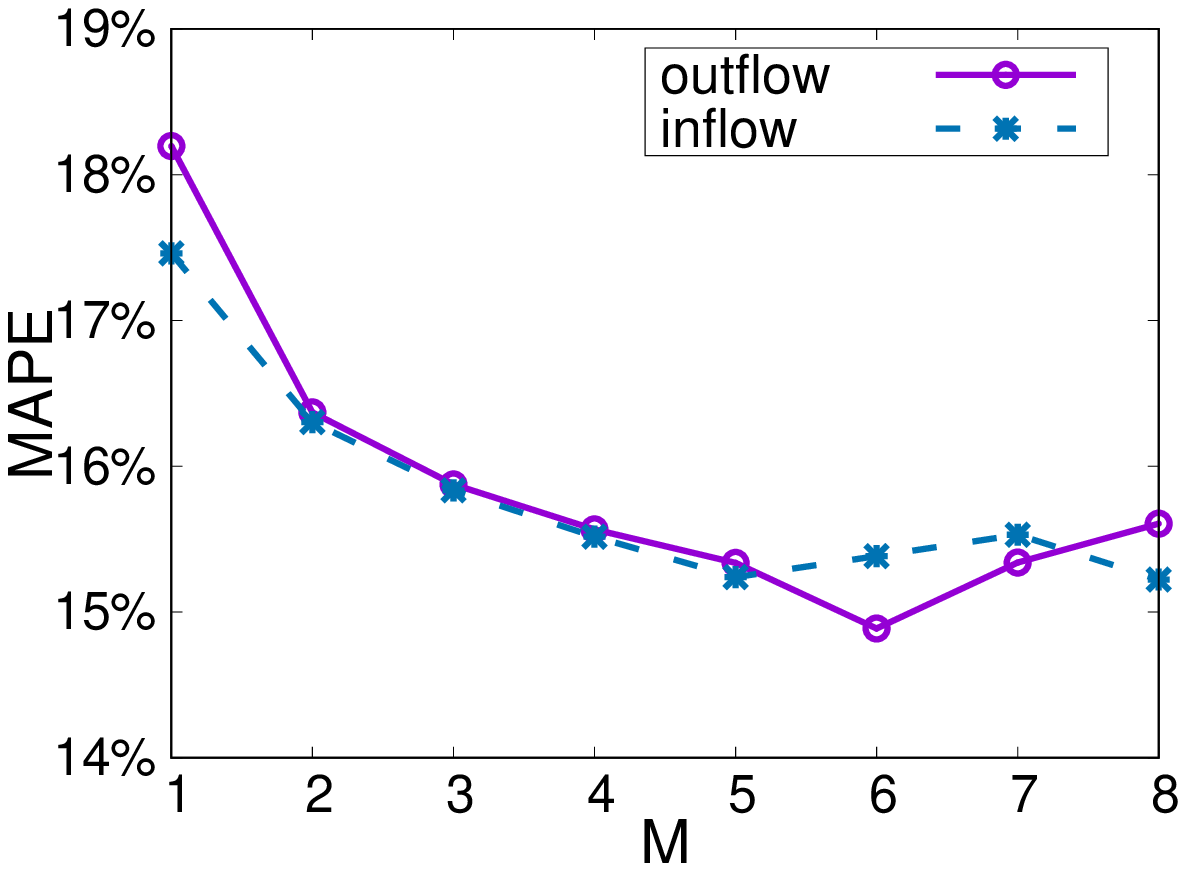}
        }
    }
    
    \caption{Impact of head number on the NYC-Taxi dataset.}
    \label{fig:head_number_taxi}
    \end{figure}
    
    \begin{figure}
    \noindent\makebox[\linewidth][c]{
        \subfigure[\small RMSE.]{
            \label{fig:bike_head_num_rmse}
            \includegraphics[width=0.24\textwidth]{./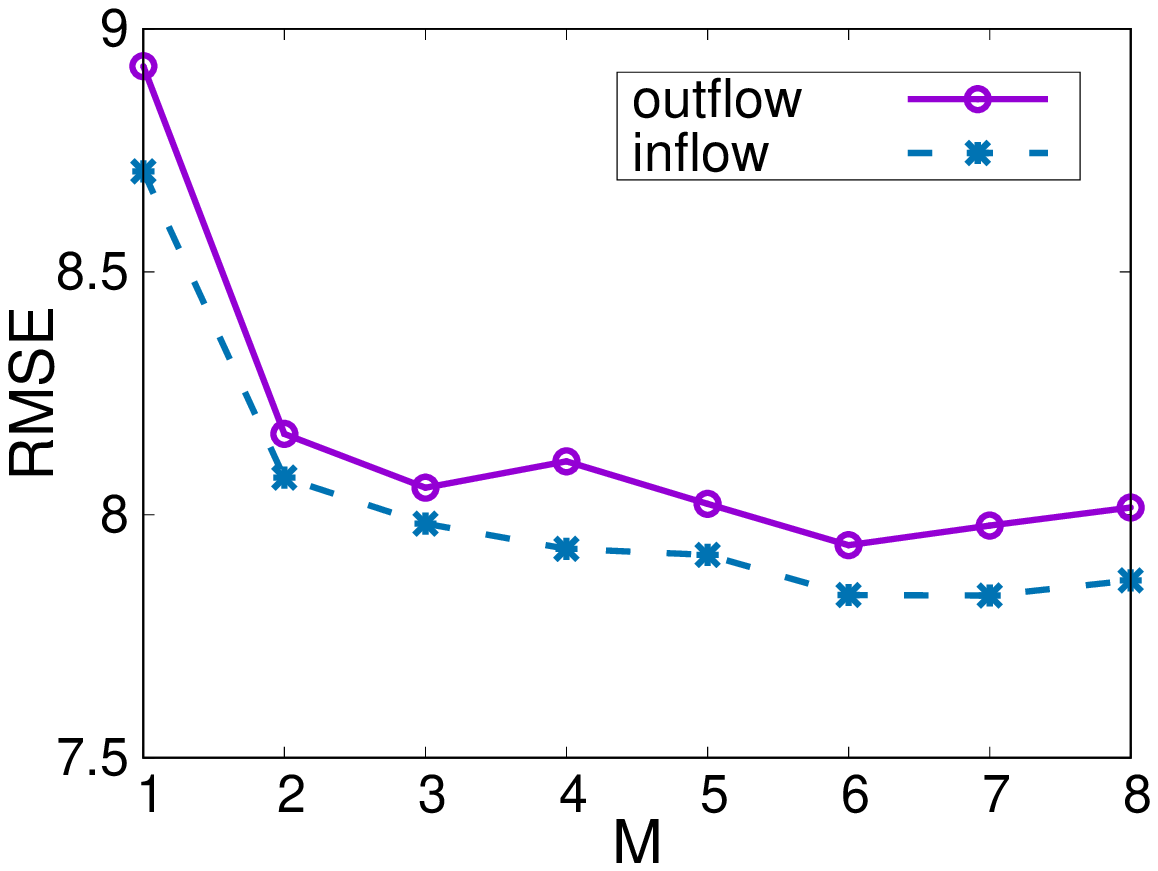}
        }
        \subfigure[\small MAPE.]{
            \label{fig:bike_head_num_mape}
            \includegraphics[width=0.24\textwidth]{./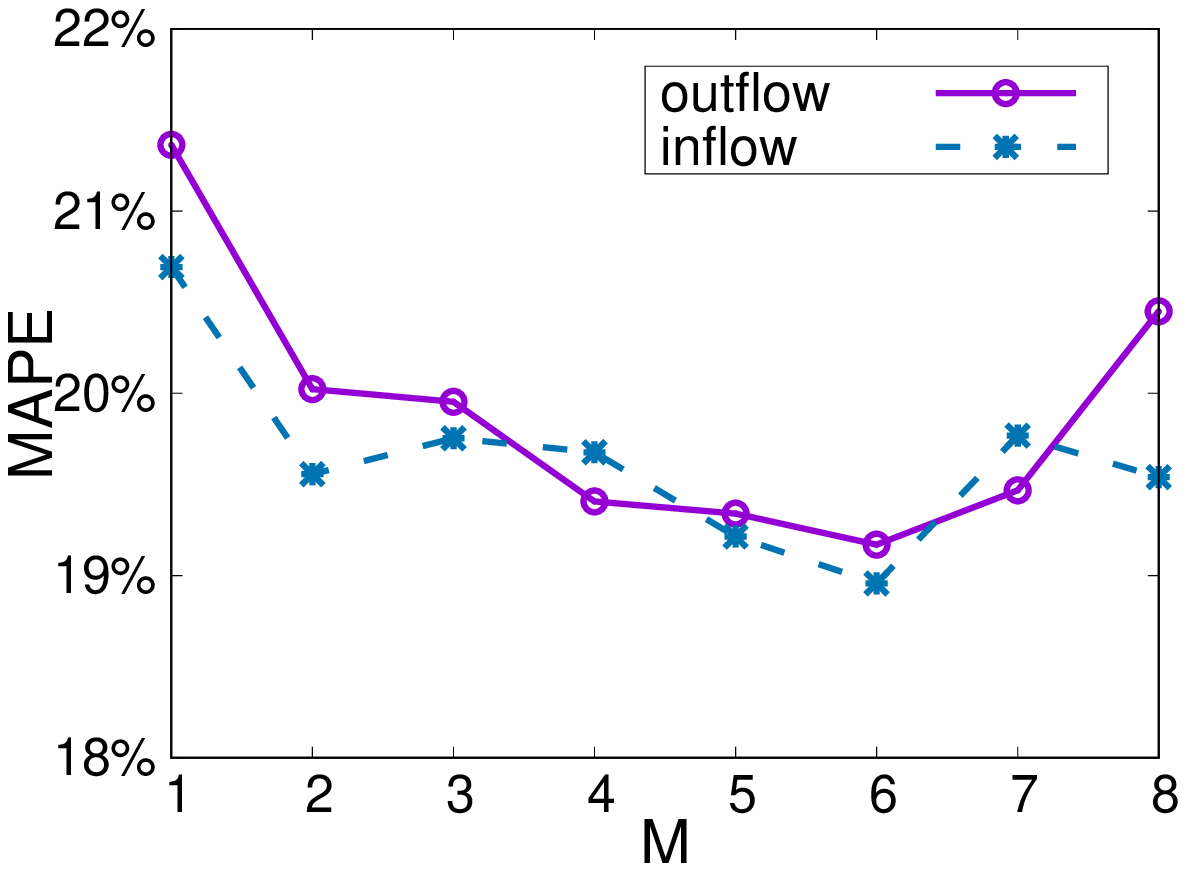}
        }
    }
    
    \caption{Impact of head number on the NYC-Bike dataset.}
    \label{fig:head_number_bike}
    \end{figure}

\section{Conclusion}
\label{sec:conclusion}
We 
propose \modelname{}, a novel, small (in parameters), computationally efficient and highly accurate model  for traffic forecasting.
%
\modelname{} employs a spatial-temporal Transformer with information fusion and region sampling to jointly consider the dynamic spatial and temporal dependencies between regions at any individual time slots, and also the possibly periodic spatial-temporal dependency from multiple time slots. In particular, \modelname{} boosts the efficiency and addresses the long-tail issue of the canonical Transformer using a novel region sampling strategy, which reduces the complexity from $O(n^2)$ to $O(n\sqrt{n})$, where $n$ is the number of regions.
We have conducted extensive experiments
to evaluate \modelname{}, using a taxi and a bike sharing datasets. Our experimental results show that \modelname{} significantly outperforms the state-of-the-art approaches in terms of training efficiency~(with a reduction of $46\% \sim 95\%$ on training time and $23\% \sim 98\%$ on network parameters), and hence is efficient in tuning, training and memory.
Despite its small size and fast training,
it  achieves higher accuracy in its online predictions than other state-of-the-art works~(with improvement of up to $9.5\%$ on RMSE, and $12.4\%$ on MAPE). 




\bibliographystyle{IEEEtran}
\bibliography{main}
\end{document}